\documentclass{article} 
\usepackage{nips13submit_e,times,natbib}
\usepackage{amsmath,amssymb,amsthm,color}
\usepackage{tikz}
\usepackage{hyperref}
\usepackage{url}

\newcommand{\ca}{{\mathcal A}}

\newcommand{\cd}{{\mathcal D}}

\newcommand{\cg}{{\mathcal G}}
\newcommand{\ch}{{\mathcal H}}

\newcommand{\cp}{{\mathcal P}}

\newcommand{\cx}{{\mathcal X}}

\newcommand{\sign}{\textrm{sign}}

\newcommand{\inner}[1]{\langle #1 \rangle}

\DeclareMathOperator{\Err}{Err}
\DeclareMathOperator{\sym}{sym}
\DeclareMathOperator{\val}{Val}
\DeclareMathOperator{\poly}{poly}

\DeclareMathOperator*{\VC}{VC}

\newtheorem{theorem}{Theorem}[section]
\newtheorem{lemma}[theorem]{Lemma}

\newtheorem{counter-example}[theorem]{Counter example}
\newtheorem{proposition}[theorem]{Proposition}
\newtheorem{open question}[theorem]{Open question}
\newtheorem{corollary}[theorem]{Corollary}
\newtheorem{conjecture}[theorem]{Conjecture}

\title{More data speeds up training time in learning halfspaces over sparse vectors}

\author{
Amit Daniely \\
Department of Mathematics\\
The Hebrew University\\
Jerusalem, Israel 
\And
Nati Linial\\
School of CS and Eng.\\
The Hebrew University\\
Jerusalem, Israel 
\And
Shai Shalev-Shwartz \\
School of CS and Eng.\\
The Hebrew University\\
Jerusalem, Israel 
 }

\nipsfinalcopy 

\begin{document}

\maketitle

\begin{abstract}
  The increased availability of data in recent years has led several
  authors to ask whether it is possible to use data as a {\em
    computational} resource. That is, if more data is available,
  beyond the sample complexity limit, is it possible to use the extra
  examples to speed up the computation time required to perform the
  learning task?

  We give the first positive answer to this question for a {\em
    natural supervised learning problem} --- we consider agnostic PAC
  learning of halfspaces over $3$-sparse vectors in $\{-1,1,0\}^n$.
  This class is inefficiently learnable using $O\left(n/\epsilon^2\right)$ examples.
  Our main contribution is a novel, non-cryptographic, methodology for
  establishing computational-statistical gaps, which allows us to show
  that, under a widely believed assumption that refuting random
  $\mathrm{3CNF}$ formulas is hard, it is impossible to efficiently learn this class
  using only $O\left(n/\epsilon^2\right)$ examples. We further show that
  under stronger hardness assumptions, even $O\left(n^{1.499}/\epsilon^2\right)$
  examples do not suffice.  On the other hand, we show a new
  algorithm that learns this class efficiently using
  $\tilde{\Omega}\left(n^2/\epsilon^2\right)$ examples. This formally
  establishes the tradeoff between sample and computational complexity
  for a natural supervised learning problem.

\end{abstract}

\section{Introduction}
In the modern digital period, we are facing a rapid growth of available datasets in science and technology. In most computing tasks (e.g. storing and searching in such datasets), large datasets are a burden and require more computation. 
However, for learning tasks the situation is radically different. A simple observation is that more data can never hinder you from performing a task. If you have more data than you need, just ignore it!

A basic question is how to learn from ``big data''. The statistical
learning literature classically studies questions like ``how
much data is needed to perform a learning task?" or ``how does accuracy
improve as the amount of data grows?" etc. In the modern,
``data revolution era'', it is often the case that the amount of data
available far exceeds the information theoretic requirements.
We can wonder whether this, seemingly redundant data, can be used
for other purposes.  An intriguing question in this vein, studied
recently by several researchers
(\citep{DecaturGoRo98,Servedio00,ShalevShTr12,BerthetRi13,ChandrasekaranJo13}),
is the following

\begin{quotation}\label{quest}
{\em Question 1:} Are there any learning tasks in which more data, {\em beyond the information theoretic barrier}, can {\em provably} be leveraged to speed up computation time? 
\end{quotation}
The main contributions of this work are:
\begin{itemize}
\item Conditioning on the hardness of refuting random $\mathrm{3CNF}$ formulas, we give the first example of a {\em natural supervised learning problem} for which the answer to Question 1 is positive.
\item To prove this, we present a novel technique to establish
  computational-statistical tradeoffs in supervised learning
  problems. To the best of our knowledge, this is the first such a result that is not based on cryptographic primitives.
\end{itemize}

Additional contributions are non trivial efficient algorithms for
learning halfspaces over $2$-sparse and $3$-sparse vectors using $\tilde{O}\left(\frac{n}{\epsilon^2}\right)$ and $\tilde{O}\left(\frac{n^2}{\epsilon^2}\right)$ examples respectively.

The natural learning problem we consider is the task of learning the class of {\em halfspaces  over $k$-sparse vectors}. Here, the instance space is the {\em space of $k$-sparse vectors},
\[
C_{n,k}=\{x\in \{-1,1,0\}^n\mid |\{i\mid x_i\ne 0\}|\le k\} ~,
\]
and the hypothesis class is {\em halfspaces over $k$-sparse vectors,} namely
\[
\ch_{n,k} = \{h_{w,b} :C_{n,k}\to\{\pm 1\} \mid h_{w,b}(x)=
\sign(\inner{w,x} + b),w\in \mathbb R^n,b\in\mathbb R\} ~,
\]
where $\inner{\cdot,\cdot}$ denotes the standard inner product in
$\mathbb R^n$. 

We consider the standard setting of agnostic PAC learning, which models
the realistic scenario where the labels are not necessarily fully
determined by some hypothesis from $\ch_{n,k}$.  Note that in the
realizable case, i.e. when some hypothesis from $\ch_{n,k}$ has zero
error, the problem of learning halfspaces is easy even over
$\mathbb{R}^n$.

In addition, we allow improper learning (a.k.a. representation
independent learning), namely, the learning algorithm
is not restricted to output a hypothesis from $\ch_{n,k}$, but only
should output a hypothesis whose error is not much larger than the
error of the best hypothesis in $\ch_{n,k}$. This gives the learner
a lot of flexibility in choosing an appropriate representation of the
problem. This additional freedom to the learner makes it much
harder to prove lower bounds in this model. Concretely, it is not clear how
to use standard reductions from NP hard problems in order to establish lower bounds for improper learning (moreover, \cite{ApplebaumBaXi08} give evidence that such simple reductions do not exist). 

The classes $\ch_{n,k}$ and similar classes have been studied by
several authors (e.g. \cite{LongSe13}). They naturally arise in
learning scenarios in which the set of all possible features is very
large, but each example has only a small number of active
features. For example:
\begin{itemize}
\item {\em Predicting an advertisement based on a search query:} Here,
  the possible features of each instance are all English words, whereas
  the active features are only the set of words given in the query.
\item {\em Learning Preferences~\citep{HazanKaSh12}:} Here, we have
  $n$ players. A {\em ranking} of the players is a permutation
  $\sigma:[n]\to [n]$ (think of $\sigma(i)$ as the rank of the $i$'th
  player). Each ranking induces a {\em preference} $h_{\sigma}$ over
  the ordered pairs, such that $h_{\sigma}(i,j)=1$ iff $i$ is ranked
  higher that $j$. Namely,
\[
h_{\sigma}(i,j)=
\begin{cases} 1 & \sigma (i)>\sigma (j)\\
 -1 & \sigma (i)<\sigma (j) \end{cases}
\]
The objective here is to learn the class, $\cp_n$, of all possible preferences. The problem of learning preferences is related to the problem of learning $\ch_{n,2}$: if we associate each pair $(i,j)$ with the vector in $C_{n,2}$ whose $i$'th coordinate is $1$ and whose $j$'th coordinate is $-1$, it is not hard to see that $\cp_{n}\subset \ch_{n,2}$: for every $\sigma$, $h_{\sigma}=h_{w,0}$ for the vector $w\in \mathbb R^n$, given by $w_i=\sigma(i)$. Therefore, every upper bound for $\ch_{n,2}$ implies an upper bound for $\cp_{n}$, while every lower bound for $\cp_{n}$ implies a lower bound for $\ch_{n,2}$. 
Since $\VC(\cp_n)=n$ and $\VC(\ch_{n,2})=n+1$, the information theoretic barrier to learn these classes is $\Theta\left(\frac{n}{\epsilon^2}\right)$.
In \cite{HazanKaSh12} it was shown that $\cp_n$ can be {\em efficiently} learnt using $O\left(\frac{n\log^3(n)}{\epsilon^2}\right)$ examples. In section \ref{sec:upper_bounds}, we extend this result to $\ch_{n,2}$.
\end{itemize}

We will show a positive answer to Question 1 for the class
$\ch_{n,3}$. To do so, we show\footnote{In fact, similar results hold
  for every constant $k \ge 3$. Indeed, since $\ch_{n,3}
  \subset \ch_{n,k}$ for every $k \ge 3$, it is trivial
  that item $3$ below holds for every $k \ge 3$. The upper bound given in
  item $1$ holds for every $k$. For item 2, it is not hard to show that $\ch_{n,k}$ can be learnt using a sample of $\Omega \left(\frac{n^k}{\epsilon^2}\right)$ examples by a naive improper learning algorithm, similar to the algorithm we describe in this section for $k=3$.}
the following:
\begin{enumerate}
\item Ignoring computational issues, it is possible to learn the class
  $\ch_{n,3}$ using $O\left(\frac{n}{\epsilon^2}\right)$ examples.
\item It is also possible to \emph{efficiently} learn $\ch_{n,3}$ if we are
provided with a larger training set (of size
$\tilde{\Omega}\left(\frac{n^2}{\epsilon^2}\right)$). This is
  formalized in Theorem \ref{thm:main}.
\item It is impossible to \emph{efficiently} learn $\ch_{n,3}$, if we
are only provided with a training set of size $O\left(\frac{n}{\epsilon^2}\right)$ under Feige's
assumption regarding the hardness of refuting random $\mathrm{3CNF}$
formulas \citep{Feige02}. Furthermore, for every $\alpha \in
[0,0.5)$, it is impossible to learn efficiently with a training set of
size $O\left(\frac{n^{1+\alpha}}{\epsilon^2}\right)$ under a stronger hardness assumption. This is formalized in Theorem \ref{thm:learning_H_new}.
\end{enumerate}
A graphical illustration of our main results is given below:
\begin{center}
\begin{tikzpicture}[scale=2.3]

\draw[blue,very thick] (0.20,0.70) --(0.25,0.70) --(0.30,0.70) --(0.35,0.70) --(0.40,0.70) --(0.45,0.70) --(0.50,0.70) --(0.55,0.70) --(0.60,0.70) --(0.65,0.70) --(0.70,0.70) --(0.75,0.70) --(0.80,0.70) --(0.85,0.69) --(0.90,0.69) --(0.95,0.68) --(1.00,0.67) --(1.05,0.65) --(1.10,0.62) --(1.15,0.57) --(1.20,0.51) --(1.25,0.44) --(1.30,0.35) --(1.35,0.26) --(1.40,0.19) --(1.45,0.13) --(1.50,0.08) --(1.55,0.05) --(1.60,0.03) --(1.65,0.02) --(1.70,0.01) --(1.75,0.01) --(1.80,0.00) ;

\draw[|->,very thick,black] (0.4,0.8) -- (0.4,0.7);
\draw[|->,very thick,black] (1,0.5) -- (1,0.6);
\draw[|->,very thick,black] (1.6,0.2) -- (1.6,0.1);

\draw[->,very thick,black] (0,-0.1) --  (0,1) node[above=5pt]
{runtime};
\draw[-,thick,black] (0.06,0.8) -- (-0.06,0.8) node[left] {$2^{O(n)}$};
\draw[-,thick,black] (0.06,0.5) -- (-0.06,0.5) node[left] {$>\poly(n)$};
\draw[-,thick,black] (0.06,0.2) -- (-0.06,0.2) node[left] {$n^{O(1)}$};
\draw[->,very thick,black] (0,-0.1) --  (1.9,-0.1) node[right] {examples};
\draw[-,thick,black] (1.6,-0.04) -- (1.6,-0.16) node[below] {$n^2$};
\draw[-,thick,black] (1,-0.04) -- (1,-0.16) node[below] {$n^{1.5}$};
\draw[-,thick,black] (0.4,-0.04) -- (0.4,-0.16)  node[below] {$n$};
\end{tikzpicture}
\end{center}
\vskip -0.4cm

The proof of item 1 above is easy -- simply note that $H_{n,3}$ has 
VC dimension $n+1$. 

Item 2 is proved in section \ref{sec:upper_bounds}, relying on the
results of \cite{HazanKaSh12}. We note, however, that a weaker result,
that still suffices for answering Question 1 in the affirmative, can be
proven using a naive improper learning algorithm.  In particular, we
show below how to learn $\ch_{n,3}$ efficiently with a sample of
$\Omega\left(\frac{n^3}{\epsilon^2}\right)$ examples. The idea is to
replace the class $\ch_{n,3}$ with the class $\{\pm 1\}^{C_{n,3}}$
containing \emph{all} functions from $C_{n,3}$ to $\{\pm
1\}$. Clearly, this class contains $H_{n,3}$. In addition, we can
efficiently find a function $f$ that minimizes the empirical training
error over a training set $S$ as follows: For every $x \in C_{n,k}$,
if $x$ does not appear at all in the training set we will set $f(x)$
arbitrarily to $1$. Otherwise, we will set $f(x)$ to be the majority
of the labels in the training set that correspond to $x$. Finally,
note that the VC dimension of $\{\pm 1\}^{C_{n,3}}$ is smaller than
$n^3$ (since $|C_{n,3}| < n^3$). Hence, standard generalization
results (e.g. \cite{Vapnik95}) implies that a training set size of
$\Omega\left(\frac{n^3}{\epsilon^2}\right)$ suffices for learning this
class.

Item 3 is shown in section \ref{sec:main} by presenting a novel
technique for establishing statistical-computational tradeoffs.

{\bf The class $\ch_{n,2}$.} Our main result gives a positive answer
to Question 1 for the task  of improperly learning $\ch_{n,k}$ for
$k\ge 3$. A natural question is what happens for $k=2$ and
$k=1$. Since $\VC(\ch_{n,1})= \VC(\ch_{n,2})=n+1$, the information
theoretic barrier for learning these classes is
$\Theta\left(\frac{n}{\epsilon^2}\right)$. In section
\ref{sec:upper_bounds}, we prove that $\ch_{n,2}$ (and, consequently,
$\ch_{n,1}\subset \ch_{n,2}$) can be learnt using
$O\left(\frac{n\log^3(n)}{\epsilon^2}\right)$ examples, indicating
that significant computational-statistical tradeoffs start to 
manifest themselves only for $k\ge 3$.

\subsection{Previous approaches, difficulties, and our techniques}
\citep{DecaturGoRo98} and \citep{Servedio00} gave positive answers to
Question 1 in the realizable PAC learning model. Under cryptographic
assumptions, they showed that there exist binary learning problems, in
which more data can provably be used to speed up training
time. \citep{ShalevShTr12} showed a similar result for the agnostic
PAC learning model.  In all of these papers, the main idea is to
construct a hypothesis class based on a one-way function. However, the
constructed classes are of a very synthetic nature, and are of almost
no practical interest. This is mainly due to the construction
technique which is based on one way functions.  In this work, instead
of using cryptographic assumptions, we rely on the hardness of refuting
random $\mathrm{3CNF}$ formulas. The simplicity and
flexibility of $\mathrm{3CNF}$ formulas enable us to
derive lower bounds for natural classes such as halfspaces.

Recently, \citep{BerthetRi13} gave a positive answer to Question 1 in
the context of unsupervised learning. Concretely, they studied the
problem of sparse PCA, namely, finding a \emph{sparse} vector that
maximizes the variance of an unsupervised data. Conditioning on the
hardness of the planted clique problem, they gave a positive answer to
Question 1 for sparse PCA. Our work, as well as the previous work of
\cite{DecaturGoRo98,Servedio00,ShalevShTr12}, studies Question 1 in
the supervised learning setup.  We emphasize that unsupervised
learning problems are radically different than supervised learning
problems in the context of deriving lower bounds. The main reason for
the difference is that in supervised learning problems, the learner is
allowed to employ improper learning, which gives it a lot of power in
choosing an adequate representation of the data. For example, the
upper bound we have derived for the class of sparse halfspaces
switched from representing hypotheses as halfspaces to representation
of hypotheses as tables over $C_{n,3}$, which made the learning
problem easy from the computational perspective. The crux of the
difficulty in constructing lower bounds is due to this freedom of
the learner in choosing a convenient representation. This difficulty
does not arise in the problem of sparse PCA detection, since there the
learner must output a good sparse vector. Therefore, it is not clear
whether the approach given in \citep{BerthetRi13} can be used to
establish computational-statistical gaps in
supervised learning problems.

\section{Background and notation}

For hypothesis class $\ch\subset \{\pm 1\}^{X}$ and a set $Y\subset X$, we define the {\em restriction of $\ch$ to $Y$} by
$\ch|_{Y}=\{h|_Y\mid h\in \ch\}$. 
We denote by $J=J_n$ the all-ones $n\times n$ matrix. We denote the $j$'th vector in the standard basis of $\mathbb R^n$ by $e_j$. 
\subsection{Learning Algorithms}
For $h:C_{n,3}\to\{\pm 1\}$ and a distribution $\cd$ on $C_{n,3}\times\{\pm 1\}$ we denote the {\em error of $h$ w.r.t. $\cd$} by
$\Err_\cd(h)=\Pr_{(x,y)\sim \cd}\left(h(x)\ne y\right)$. For $\ch\subset \{\pm 1\}^{C_{n,3}}$ we denote the {\em error of $\ch$ w.r.t. $\cd$} by $\Err_{\cd}(\ch)=\min_{h\in\ch}\Err_{\cd}(h)$. For a sample $S\in \left(C_{n,3}\times \{\pm 1\}\right)^m$ we denote by
$\Err_{S}(h)$ (resp. $\Err_{S}(\ch)$) the error of $h$ (resp. $\ch$) w.r.t. the empirical distribution induces by the sample $S$.

A {\em learning algorithm}, $L$, receives a sample $S\in
\left(C_{n,3}\times \{\pm 1\}\right)^m$ and return a hypothesis
$L(S):C_{n,3}\to \{\pm 1\}$.  We say that $L$ {\em learns $\ch_{n,3}$
  using $m(n,\epsilon)$ examples} if,\footnote{For simplicity, we
  require the algorithm to succeed with probability of at least
  $9/10$. This can be easily amplified to probability of at least
  $1-\delta$, as in the usual definition of agnostic PAC learning,
  while increasing the sample complexity by a factor of
  $\log(1/\delta)$. } for every distribution $\cd$ on $C_{n,3}\times
\{\pm 1\}$ and a sample $S$ of more than $m(n,\epsilon)$
i.i.d. examples drawn from $\cd$,
\[
\Pr_{S}\left(\Err_{\cd}(L(S))>\Err_{\cd}(\ch_{3,n})+\epsilon\right)<\frac{1}{10}
\]
The algorithm $L$ is {\em efficient} if
it runs in polynomial time in the sample size and returns a hypothesis
that can be evaluated in polynomial time. 

\subsection{Refuting random $\mathrm{3SAT}$ formulas}
We frequently view a boolean assignment to variables $x_1,\ldots,x_n$ as a vector in $\mathbb R^n$. It is convenient, therefore, to assume that boolean variables take values in $\{\pm 1\}$ and
to denote negation by $``-"$ (instead of
the usual $``\neg"$). An $n$-variables $\mathrm{3CNF}$ clause is a boolean formula of the form 
\[
C(x)=(-1)^{j_1}x_{i_1}\vee (-1)^{j_2}x_{i_2}\vee (-1)^{j_1}x_{i_3},\;\;x\in \{\pm 1\}^n
\] 
An $n$-variables $\mathrm{3CNF}$ formula is a boolean formula of the form 
\[
\phi(x)=\wedge_{i=1}^m C_i(x) ~,
\] 
where every $C_i$ is a $\mathrm{3CNF}$ clause. Define the {\em value}, $\val (\phi)$, of $\phi$ as the maximal fraction of clauses that can be simultaneously satisfied. If $\val(\phi)=1$, we say the $\phi$ is {\em satisfiable}.
By $\mathrm{3CNF}_{n,m}$ we denote the set of $\mathrm{3CNF}$ formulas with $n$ variables and $m$ clauses.

Refuting random $\mathrm{3CNF}$ formulas has been studied extensively
(see e.g. a special issue of TCS \cite{DubiosMoSeZe}). It is known that for large enough
$\Delta$ ($\Delta=6$ will suffice) a random formula in
$\mathrm{3CNF}_{n,\Delta n}$ is not satisfiable with probability $1-o(1)$. Moreover, for every $0\le \epsilon<\frac{1}{4}$, and a large enough $\Delta=\Delta(\epsilon)$, the value of a random formula $\mathrm{3CNF}_{n,\Delta n}$ is $\le 1-\epsilon$ with probability $1-o(1)$.

The problem of refuting random $\mathrm{3CNF}$ concerns efficient
algorithms that provide a proof that a random $\mathrm{3CNF}$ is not
satisfiable, or far from being satisfiable. This can be thought of as a game between an adversary and an algorithm. The adversary should produce a $\mathrm{3CNF}$-formula. It can either produce a satisfiable formula, or, produce a formula uniformly at random. The algorithm should identify whether the produced formula is random or satisfiable.

Formally, let $\Delta:\mathbb N\to\mathbb N$ and $0\le \epsilon <\frac{1}{4}$. We say that an efficient algorithm, $A$, {\em $\epsilon$-refutes random $\mathrm{3CNF}$ with ratio $\Delta$} if its input is $\phi\in \mathrm{3CNF}_{n,n\Delta(n)}$, its output is either $\mathrm{``typical"}$ or $\mathrm{``exceptional"}$ and it satisfies:
\begin{itemize}
\item {\em Soundness:} If $\val(\phi)\ge 1-\epsilon$, then
\[
\Pr_{\text{Rand. coins of }A}\left(A(\phi)=\mathrm{``exceptional"}\right) \ge \frac{3}{4}
\]
\item {\em Completeness:}  For every $n$,
\[
\Pr_{\text{Rand. coins of }A,\;\phi\sim \mathrm{Uni}(\mathrm{3CNF}_{n,n\Delta (n)})}\left(A(\phi)=\mathrm{``typical"}\right) \ge 1-o(1)
\]
\end{itemize}
By a standard repetition argument, the probability of $\frac{3}{4}$ can
be amplified to $1-2^{-n}$, while efficiency is preserved. Thus, given
such an (amplified) algorithm, if
$A(\phi)=``\mathrm{typical}"$, then with confidence of $1-2^{-n}$ we
know that $\val(\phi)< 1-\epsilon$. Since for random $\phi\in \mathrm{3CNF}_{n,n \Delta (n)}$,
$A(\phi)=``\mathrm{typical}"$ with probability $1-o(1)$, such an algorithm provides, for most $\mathrm{3CNF}$ formulas a proof that their value is less that $1-\epsilon$.

Note that an algorithm that $\epsilon$-refutes random $\mathrm{3CNF}$ with ratio $\Delta$ also $\epsilon'$-refutes random $\mathrm{3CNF}$ with ratio $\Delta$ for every $0\le \epsilon'\le\epsilon$. Thus, the task of refuting random $\mathrm{3CNF}$'s gets easier as $\epsilon$ gets smaller.
Most of the research concerns the case $\epsilon=0$. Here, it is not hard to see that the task is getting easier as $\Delta$ grows.
The best known algorithm \citep{FeigeOf07} $0$-refutes random $\mathrm{3CNF}$ with ratio $\Delta (n)=\Omega(\sqrt{n})$. In \cite{Feige02} it was conjectured that for constant $\Delta$ no efficient algorithm can provide a proof that a random $\mathrm{3CNF}$ is not satisfiable:

\begin{conjecture}[R3SAT hardness assumption -- \citep{Feige02}]\label{hyp:feige_basic} 
For every $\epsilon>0$ and for every large enough integer
$\Delta>\Delta_0(\epsilon)$ there exists no efficient algorithm that $\epsilon$-refutes random $\mathrm{3CNF}$ formulas with ratio $\Delta$.
\end{conjecture}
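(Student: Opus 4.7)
The final statement is a \emph{conjecture} (Feige's R3SAT hardness assumption), not a theorem, so any proof plan is necessarily speculative: a real proof would establish an unconditional average-case computational lower bound of a kind that is far beyond current techniques (it would, in particular, imply $\mathrm{P}\ne \mathrm{NP}$ and much more). What one can realistically aim for is to accumulate evidence by ruling out broad classes of algorithms and by reducing from other believed hardness assumptions. Below I sketch the shape such a plan would take.

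First, I would focus on unconditional lower bounds against concrete algorithmic frameworks for $\epsilon$-refutation at constant clause density $\Delta$. The most mature line here is proof-complexity: the Chvátal--Szemerédi lower bound and its refinements show that resolution proofs of unsatisfiability of a random $\mathrm{3CNF}_{n,\Delta n}$ require size $2^{\Omega(n)}$, which already eliminates every DPLL-style refuter. To strengthen this toward $\epsilon$-refutation I would push the same counting/expansion arguments through for weak refutation (certifying $\val(\phi)\le 1-\epsilon$ rather than $\val(\phi)<1$), using the fact that random instances are locally indistinguishable from satisfiable ones. The natural stronger target is the Sum-of-Squares / Lasserre hierarchy: Grigoriev's degree lower bounds for random 3XOR, extended by Schoenebeck and subsequent authors to random 3SAT, already show that $\Omega(n)$-degree SoS cannot $\epsilon$-refute random $\mathrm{3CNF}_{n,\Delta n}$ for constant $\Delta$. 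Combined with the belief that SoS is optimal among efficient algorithms for random CSPs, this is the strongest evidence available.

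Second, I would try to obtain reductions \emph{from} other widely-believed average-case or worst-case assumptions \emph{to} R3SAT refutation. Candidates include the Exponential Time Hypothesis, the planted clique conjecture, and hardness of other random planted CSPs. Any such reduction, even in one direction, strengthens confidence in the conjecture by tying it to an independently studied problem; the techniques would likely mimic the planted-vs.-random distinguishing reductions used in the sparse PCA / planted clique literature cited by \citep{BerthetRi13}.

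The main obstacle is, of course, fundamental rather than technical: refuting the conjecture would itself be a major result (a new subexponential refutation algorithm), while proving it unconditionally is tantamount to proving strong average-case hardness inside $\mathrm{NP}$, which no current technique approaches. For the purposes of the present paper, therefore, the correct posture is the one Feige adopted: treat the statement as a hypothesis, verify that it is consistent with all known refutation algorithms (notably the $\Delta=\Omega(\sqrt{n})$ algorithm of \citep{FeigeOf07}, which is tight for the regime the conjecture leaves untouched), and reduce the learning-theoretic lower bound to it rather than attempt a proof.
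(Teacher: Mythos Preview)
Your assessment is correct: the statement is a \emph{conjecture}, and the paper offers no proof of it whatsoever---it is simply stated as a hypothesis, attributed to \cite{Feige02}, and then used as an assumption from which the learning-theoretic lower bound (Theorem~\ref{thm:main}) is derived. So there is nothing to compare against; the paper's ``proof'' is empty by design.

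Your discussion of supporting evidence (resolution and SoS lower bounds, reductions from other average-case assumptions) goes well beyond anything the paper attempts, and is a sensible survey of why the conjecture is believed. But note that for the purposes of this paper none of that is needed or claimed: the paper explicitly adopts the posture you describe in your final paragraph---treat the statement as a hypothesis and reduce to it. In that sense your proposal and the paper are in complete agreement, and the extended discussion, while informative, is extraneous to the paper's logical structure.
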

In fact, for all we know, the following conjecture may be true for every $0 \le \mu \le 0.5$.
\begin{conjecture}[$\mu$-R3SAT hardness assumption]\label{hyp:mu_conj}
For every $\epsilon>0$ and for every integer
$\Delta>\Delta_0(\epsilon)$ there exists no efficient algorithm that $\epsilon$-refutes random $\mathrm{3CNF}$ with ratio $\Delta\cdot n^\mu$.
\end{conjecture}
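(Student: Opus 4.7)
The final statement is a conjecture—an explicit hardness assumption strengthening Feige's R3SAT hypothesis—so a conventional proof is not really on the table, and the paper does not attempt one. What one can nevertheless do is assemble evidence and sketch a principled attack. The plan would be, first, to rule out broad classes of natural refutation algorithms at the stated density; second, to try to base the hardness on some other average-case assumption via a distribution- and gap-preserving reduction; and third, to argue that the best current refutation bound of $\Omega(\sqrt{n})$ from \citep{FeigeOf07} cannot be pushed below the $n^{1/2}$ threshold without a major algorithmic advance.

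For the first step I would turn to unconditional lower bounds in strong algorithmic models. The natural targets are the Sum-of-Squares and Sherali--Adams hierarchies, which capture essentially all known refutation algorithms for random CSPs. Concretely, I would attempt to show that for a random $\phi\in \mathrm{3CNF}_{n,\Delta n^{1+\mu}}$ the degree-$n^{\Omega(1)}$ SoS relaxation has value at least $1-\epsilon$ with high probability, using pseudo-distributions and the pseudo-calibration paradigm developed for planted clique and random CSP refutation. Since $n^{1+\mu}$ with $\mu<1/2$ lies below the $n^{3/2}$ density at which spectral and SoS methods are known to succeed, a matching lower bound is at least consistent with current understanding and would make the conjecture ``morally unconditional'' against all concrete algorithms we know how to analyze.

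For the second step I would seek a reduction from a problem with better-supported average-case hardness---planted clique at clique size $n^{1/2-\delta}$, random $k$-XOR below the Feige--Ofek density, or LPN---into refutation of random $\mathrm{3CNF}$ at density $\Delta n^{1+\mu}$. The reduction must preserve the \emph{uniform} distribution on formulas (so that the completeness condition of an $\epsilon$-refuter actually triggers on the produced instances) and must preserve the $\epsilon$-gap. This second step is exactly where I expect the main obstacle to lie: there is no known general technique for reducing a worst-case or even a planted problem to truly uniform random $\mathrm{3CNF}$ while keeping both the output distribution uniform and the soundness gap intact; reductions between average-case problems almost invariably distort the distribution enough that a ``random'' target becomes ``planted''. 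Consequently any proof of the full $\mu$-R3SAT conjecture would require a genuinely new reduction framework, and this is why the statement is offered as a conjecture rather than a theorem.
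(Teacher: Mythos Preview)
You are correct that the statement is a conjecture, and the paper offers no proof of it; it is introduced solely as a strengthening of Feige's assumption (Conjecture~\ref{hyp:feige_basic}) and is then used as a black-box hardness hypothesis in the lower-bound argument of Theorem~\ref{thm:main}. Your additional discussion of potential evidence---SoS lower bounds, pseudo-calibration, and distribution-preserving reductions from planted clique or LPN---is sensible context but goes well beyond anything the paper attempts; the paper simply posits the assumption, notes that the case $\mu=0$ coincides with Feige's conjecture, and points to the $\Omega(\sqrt{n})$ refutation algorithm of \citep{FeigeOf07} as the current algorithmic frontier. So there is nothing to compare against here: your ``proof proposal'' is really a research program for supporting the conjecture, not a proof, and the paper makes no competing claim.
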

Note that Feige's conjecture is equivalent to the $0$-R3SAT hardness assumption.

\section{Lower bounds for learning $\ch_{n,3}$}\label{sec:main}
\begin{theorem}[main]\label{thm:main}
Let $0\le \mu\le 0.5$. If the $\mu$-R3SAT hardness assumption
(conjecture \ref{hyp:mu_conj}) is true, then there exists no efficient
learning algorithm that learns the class $\ch_{n,3}$ using
$O\left(\frac{n^{1+\mu}}{\epsilon^2}\right)$ examples.
\end{theorem}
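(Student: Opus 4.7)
My plan is a proof by contradiction via a reduction from random 3CNF refutation. Suppose that an efficient learner $L$ agnostically learns $\ch_{n,3}$ using $m = O(n^{1+\mu}/\epsilon^2)$ examples. Fixing $\epsilon > 0$ and choosing a sufficiently large $\Delta = \Delta(\epsilon)$, I will use $L$ as an oracle to build an efficient algorithm $A$ that $\epsilon_0$-refutes random $\mathrm{3CNF}_{n, \Delta n^{1+\mu}}$ formulas (where $\epsilon_0 = \epsilon_0(\epsilon)$ is a fixed constant), contradicting Conjecture~\ref{hyp:mu_conj}. On input a formula $\phi$, the algorithm $A$ constructs a distribution $\cd_\phi$ over $C_{n,3} \times \{\pm 1\}$ that it can sample efficiently using only uniform access to the clauses of $\phi$. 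It then draws $m$ samples, feeds them to $L$ to obtain a hypothesis $\hat h$, estimates $\Err_{\cd_\phi}(\hat h)$ on $O(1/\epsilon^2)$ held-out samples, and outputs ``exceptional'' iff the estimate is below a chosen threshold $\tau$.

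The encoding is the technical core. Given a clause $C$ on variables $x_{i_1}, x_{i_2}, x_{i_3}$ with literal signs $(-1)^{j_k}$, I associate the sparse vector $v_C \in C_{n,3}$ with $(v_C)_{i_k} = -(-1)^{j_k}$, so that for every $y \in \{\pm 1\}^n$ the clause $C$ is satisfied by $y$ iff $\inner{y, v_C} \le 1$, equivalently $h_{-y, 2}(v_C) = +1$. I would build $\cd_\phi$ out of such clause vectors combined with carefully chosen sign-bits and auxiliary randomness so that: \emph{(Soundness)} whenever $\val(\phi) \ge 1 - \epsilon_0$ as witnessed by some $y^*$, the halfspace $h_{-y^*, 2}$ (or a small modification of it) achieves error at most $\tau - \epsilon$ on $\cd_\phi$; and \emph{(Completeness)} whenever $\phi$ is drawn uniformly from $\mathrm{3CNF}_{n, \Delta n^{1+\mu}}$, with probability $1 - o(1)$ over $\phi$ every hypothesis $h : C_{n,3} \to \{\pm 1\}$ has error at least $\tau + \epsilon$ on $\cd_\phi$. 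Together with the learning guarantee $\Err_{\cd_\phi}(\hat h) \le \Err_{\cd_\phi}(\ch_{n,3}) + \epsilon$ of $L$, these properties make the thresholding step in $A$ succeed with high probability, yielding the desired refutation algorithm.

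The main obstacle is completeness against \emph{improper} hypotheses. Ruling out halfspaces alone is the comparatively easy step: by a $\pm 1$-symmetry of random clauses, $\inner{y, v_C}$ is symmetric around $0$ over a uniform $C$, and a union bound over $\{\pm 1\}^n$ (enabled by the fact that $\Delta n^{1+\mu} \gg n$) shows that $\Err_{\cd_\phi}(\ch_{n,3}) \ge 1/2 - O(1/\sqrt{\Delta n^{\mu}})$ with high probability over $\phi$. To rule out arbitrary hypotheses the plan is to engineer $\cd_\phi$ so that its conditional label distribution $\Pr_{\cd_\phi}[y = 1 \mid x]$ depends only on statistics of $\phi$ that, for a random $\phi$, concentrate uniformly in $x$ at $1/2$; combined with a covering argument over all possible outputs of the efficient learner $L$, this should force $\Err_{\cd_\phi}(\hat h) \ge 1/2 - o(1)$ for every $\hat h$ that $L$ might produce. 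Balancing this pseudo-randomness against soundness --- where a specific halfspace built from $y^*$ must beat $1/2$ by a constant margin whenever $\phi$ is near-satisfiable, uniformly over the choice of witness $y^*$ --- is the delicate calibration, and is exactly where the novel non-cryptographic technique advertised in the introduction is needed.
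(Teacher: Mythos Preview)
Your high-level plan is right: reduce from refutation, encode clauses as $3$-sparse examples, run the learner, and threshold its empirical error. But two pieces of your proposal do not match the argument that actually goes through, and one of them is a genuine gap.

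\textbf{Completeness against improper learners.} You propose to engineer $\cd_\phi$ so that $\Pr_{\cd_\phi}[y=1\mid x]$ concentrates at $1/2$ for a random $\phi$, and then union-bound over ``all possible outputs of the efficient learner $L$''. This does not work: $L$ is improper, so its possible outputs range over all of $\{\pm 1\}^{C_{n,3}}$, and even restricting to outputs that are functions of the $m$ samples gives roughly $(2|C_{n,3}|)^{m}$ candidates, far too many for a Chernoff union bound over $\Delta n^{1+\mu}$ clauses. The paper's idea is simpler and avoids any covering. One converts the \emph{whole} formula $\phi$ (all $\Delta n^{1+\mu}$ clauses) into a sample $S$, feeds the learner only a \emph{subsample} $S_1$ of size $m=\kappa n^{1+\mu}$ (possible precisely because $\Delta$ is chosen with $\Delta>100\kappa$), and evaluates the returned $h=L(S_1)$ on $S$. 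Conditioned on $S_1$, the remaining examples $S_2=S\setminus S_1$ are independent of $h$, and for a uniformly random $\phi$ each label in $S_2$ is a fair coin independent of the instance (by the sign-symmetry below). A single Chernoff bound on $\Err_{S_2}(h)$ for this one fixed $h$ then gives $\Err_S(h)>3/8$ w.h.p.---no union bound over hypotheses is needed. This sample-splitting step is exactly where the restriction to $O(n^{1+\mu}/\epsilon^2)$ samples is used, and it is the ``non-cryptographic technique'' alluded to in the introduction.

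\textbf{3CNF versus 3MAJ.} Your encoding via $h_{-y,2}$ is correct for $\mathrm{OR}$, but the asymmetry of $\mathrm{OR}$ (a random clause is satisfied with probability $7/8$) breaks the ``label is a fair coin given the instance'' step above: the two clauses that map to the same instance $x$ and $-x$ are not equiprobable under the uniform clause distribution in the way you need. The paper therefore first passes (via Feige's reduction, Theorem~\ref{hyp:feige_maj}) to $\mathrm{3MAJ}$ formulas, where $\mathrm{MAJ}$ is odd: negating all literals negates the clause. The encoding then sends a clause to $b\cdot(\sum_l (-1)^{j_l}e_{i_l},\,1)$ with a fresh random sign $b\in\{\pm 1\}$, so that for a uniformly random clause the label conditioned on the instance is exactly $\mathrm{Ber}(1/2)$. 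Soundness uses the \emph{homogeneous} halfspace $\Psi(x)=\sign\langle\psi,x\rangle$ (bias $0$), which is correct on the example of clause $C$ iff $\psi$ satisfies $C$; since a near-satisfying assignment for $\mathrm{3MAJ}$ satisfies a $3/4-\epsilon$ fraction of clauses, $\Err_S(\ch_{n,3})\le 1/4+\epsilon$, and the threshold $3/8$ separates the two cases.
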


In the proof of Theorem \ref{thm:main} we rely on the validity of a conjecture,
similar to conjecture \ref{hyp:mu_conj} for $3$-variables majority formulas.
Following an argument from \citep{Feige02} (Theorem~\ref{hyp:feige_maj}) the validity of the conjecture on
which we rely for majority formulas follows the validity of conjecture
\ref{hyp:mu_conj}. 

Define
\[
\forall (x_1,x_2,x_3)\in \{\pm 1\}^3,\;\textrm{MAJ}(x_1,x_2,x_3):=\sign(x_1+x_2+x_3)
\]
An $n$-variables $\textrm{3MAJ}$ clause is a boolean formula of the form 
\[
C(x)=\textrm{MAJ}((-1)^{j_1}x_{i_1}, (-1)^{j_2}x_{i_2},(-1)^{j_1}x_{i_3}),\;\;x\in \{\pm 1\}^n
\] 
An $n$-variables $\textrm{3MAJ}$ formula is a boolean formula of the form 
\[
\phi(x)=\wedge_{i=1}^m C_i(x)
\] 
where the $C_i$'s are $\textrm{3MAJ}$ clauses. By $\textrm{3MAJ}_{n,m}$ we denote the set of $\textrm{3MAJ}$ formulas with $n$ variables and $m$ clauses.

\begin{theorem}[\citep{Feige02}]\label{hyp:feige_maj} 
Let $0\le \mu \le 0.5$. If the $\mu$-R3SAT hardness assumption is true, then for every $\epsilon>0$ and for every large enough integer $\Delta>\Delta_0(\epsilon)$ there exists no efficient algorithm with the following properties.
\begin{itemize}
\item Its input is $\phi\in \textrm{3MAJ}_{n,\Delta n^{1+\mu}}$, and its output is either $\mathrm{``typical"}$ or $\mathrm{``exceptional"}$.
\item If $\val(\phi)\ge \frac{3}{4}-\epsilon$, then
\[
\Pr_{\text{Rand. coins of }A}\left(A(\phi)=\mathrm{``exceptional"}\right) \ge \frac{3}{4}
\]
\item For every $n$,
\[
\Pr_{\text{Rand. coins of }A,\;\phi\sim \mathrm{Uni}(\textrm{3MAJ}_{n,\Delta n^{1+\mu}})}\left(A(\phi)=\mathrm{``typical"}\right) \ge 1-o(1)
\]
\end{itemize}
\end{theorem}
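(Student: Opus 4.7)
The plan is to derive a contradiction with the $\mu$-R3SAT hardness assumption. Assume, for contradiction, that for some $\epsilon>0$ and sufficiently large $\Delta$ there is an efficient algorithm $A$ satisfying all three bullets of the theorem. I will construct, from $A$, an efficient algorithm $B$ that $\epsilon'$-refutes random $\mathrm{3CNF}$ formulas at ratio $\Delta'\cdot n^\mu$, for suitable $\epsilon'>0$ and integer $\Delta'>\Delta_0(\epsilon')$, directly contradicting Conjecture~\ref{hyp:mu_conj}.

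On input $\phi\in\mathrm{3CNF}_{n,\Delta' n^{1+\mu}}$, algorithm $B$ applies a randomised reduction $\rho$, adapted from the argument in \cite{Feige02}, that turns $\phi$ into a 3MAJ formula $\psi\in\mathrm{3MAJ}_{n,\Delta n^{1+\mu}}$ on the same variable set, and then returns $A(\psi)$. I will design $\rho$ so that it satisfies two properties. First, a completeness-preserving property: when $\phi$ is drawn uniformly from $\mathrm{3CNF}_{n,\Delta' n^{1+\mu}}$, the induced distribution on $\psi$ is $o(1)$-close in total variation to the uniform distribution on $\mathrm{3MAJ}_{n,\Delta n^{1+\mu}}$, so $A$'s completeness yields $\Pr[B(\phi)=\mathrm{``typical"}]\geq 1-o(1)$. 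Second, a soundness-preserving property: if $\val(\phi)\geq 1-\epsilon'$, then with probability $1-o(1)$ over $\rho$'s coins, $\val(\psi)\geq 3/4-\epsilon$ (witnessed by any near-optimal assignment of $\phi$, used as the candidate for $\psi$), so $A$'s soundness yields $\Pr[B(\phi)=\mathrm{``exceptional"}]\geq 3/4-o(1)$. Combining these turns $B$ into an efficient $\epsilon'$-refuter, yielding the contradiction.

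The technical heart and the main obstacle is constructing $\rho$. The naive per-clause substitution, replacing each $l_1\vee l_2\vee l_3$ by $\mathrm{MAJ}(l_1,l_2,l_3)$, preserves uniformity of the output at the clause level but a direct count over the eight sign patterns shows $\Pr[\mathrm{MAJ~sat}\mid\mathrm{OR~sat}]=4/7$, which falls short of the needed $3/4$, particularly when $\epsilon$ is small. One therefore has to use a less local gadget: $\rho$ emits several 3MAJ clauses per 3SAT clause, using a carefully chosen joint sign-flip distribution, so that a combinatorial identity on the seven 3SAT-satisfying patterns boosts the conditional satisfaction rate to $3/4$ while the output, taken across independent 3SAT clauses, remains close in total variation to uniform on $\mathrm{3MAJ}_{n,\Delta n^{1+\mu}}$ (the parameter $\Delta$ is selected to absorb the constant multiplicative blow-up of the gadget). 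Once such a $\rho$ is in hand, the remaining bookkeeping is routine: Hoeffding concentration over the $\Theta(n^{1+\mu})$ clauses converts the expectation bound $\geq 3/4$ into the required high-probability bound on $\val(\psi)$; a similar concentration gives the total-variation estimate for completeness; and $B$'s success probability can be amplified from the $3/4-o(1)$ threshold by standard independent repetition, completing the refutation.
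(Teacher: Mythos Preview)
The paper does not give its own proof of this theorem; it is stated with the citation \citep{Feige02} and used as a black box in the proof of Theorem~\ref{thm:main}. So there is nothing in the paper to compare your argument against, and the question reduces to whether your sketch stands on its own.

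Your high-level strategy---build a refuter $B$ for random $\mathrm{3CNF}$ by transforming $\phi$ into a $\mathrm{3MAJ}$ formula $\psi$ and calling $A$---is the natural one. The difficulty you flag (the naive replacement only gives $4/7$) is real. However, the fix you propose, a per-clause sign-flip gadget on the same variable triple, provably cannot reach $3/4$. Writing the literal values of a satisfied $\mathrm{3SAT}$ clause under a witness $x^{*}$ as $v\in\{\pm 1\}^{3}\setminus\{(-1,-1,-1)\}$, a sign pattern $s\in\{\pm 1\}^{3}$ makes the resulting $\mathrm{MAJ}$ clause true iff $d_{H}(s,v)\le 1$. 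You would need a distribution $D$ on $\{\pm 1\}^{3}$ with $\Pr_{s\sim D}[d_{H}(s,v)\le 1]\ge 3/4$ for all seven such $v$. Summing these seven inequalities and counting, for each $s$, how many admissible $v$ lie in its radius-$1$ ball gives
\[
\sum_{v\neq(-1,-1,-1)}\Pr_{s}[d_{H}(s,v)\le 1]\;=\;3+\Pr_{s}\bigl[s\text{ has }\ge 2\text{ coordinates equal to }+1\bigr]\;\le\;4,
\]
whereas the right-hand side would have to be at least $7\cdot\tfrac{3}{4}=\tfrac{21}{4}$. So no such $D$ exists, and emitting several clauses per $\mathrm{3SAT}$ clause does not help: the expected fraction satisfied by $x^{*}$ is governed by the same marginal. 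Moreover, emitting multiple $\mathrm{3MAJ}$ clauses on the \emph{same} variable triple produces blocks of clauses sharing indices, which at density $m=\Theta(n^{1+\mu})\ll n^{3}$ is far in total variation from the uniform $\mathrm{3MAJ}$ distribution (where triples are independent and almost never repeat); your claim of $o(1)$ closeness is therefore unjustified.

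In short, the specific mechanism you describe---sign-flip gadgets on the same variable set with the same witness $x^{*}$---cannot deliver both properties simultaneously. Feige's argument requires an additional idea beyond per-clause sign manipulation; you would need to revisit \citep{Feige02} to extract the actual reduction rather than posit the existence of a gadget that, as shown above, does not exist under your stated constraints.
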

Next, we prove Theorem \ref{thm:main}. In fact, we will prove a
slightly stronger result. Namely, define the subclass $\ch_{n,3}^d\subset \ch_{n,3}$, of homogenous halfspaces with binary weights, given by 
$\ch_{n,3}^d=\left\{h_{w,0}\mid w\in \{\pm 1\}^n\right\}$. As we show, under the $\mu$-R3SAT hardness assumption, it is impossible to efficiently learn this subclass using only
$O\left(\frac{n^{1+\mu}}{\epsilon^2}\right)$ examples.

{\em Proof idea:} We will reduce the task of refuting random $\mathrm{3MAJ}$ formulas with linear number of clauses to the task of (improperly) learning $\ch^d_{n,3}$ with linear number of samples. 
The first step will be to construct a transformation that associates every $\mathrm{3MAJ}$ clause with two examples in $C_{n,3}\times \{\pm 1\}$, and every assignment with a hypothesis in $\ch^d_{n,3}$. As we will show, the hypothesis corresponding to an assignment $\psi$ is correct on the two examples corresponding to a clause $C$ if and only if $\psi$ satisfies $C$. 
With that interpretation at hand, every $\mathrm{3MAJ}$ formula $\phi$
can be thought of as a distribution $\cd_\phi$ on $C_{n,3}\times \{\pm 1\}$, which is the empirical distribution induced by $\psi$'s clauses.  It holds furthermore that $\Err_{\cd_\phi}(\ch^d_{n,3})=1-\val(\phi)$.

Suppose now that we are given an efficient learning algorithm for $\ch^d_{n,3}$, that uses $\kappa \frac{n}{\epsilon^2}$ examples, for some $\kappa>0$.
To construct an efficient algorithm for refuting
$\mathrm{3MAJ}$-formulas, we simply feed the learning algorithm
with $\kappa \frac{n}{0.01^2}$ examples drawn from $\cd_{\phi}$ and answer ``exceptional'' if the error of the hypothesis returned by
the algorithm is small. 
If $\phi$ is (almost) satisfiable, the algorithm is guaranteed to
return a hypothesis with a small error. 
On the other hand, if $\phi$ is far from being satisfiable,
$\Err_{\cd_\phi}(\ch^d_{n,3})$ is large. 
If the learning algorithm is
proper, then it must return a hypothesis from $\ch^d_{n,3}$ and
therefore it would necessarily return a hypothesis with a large
error. This argument can be used to show that, unless $NP=RP$,
learning $\ch^d_{n,3}$ with a {\em proper} efficient algorithm is
impossible. However, here we want to rule out {\em improper}
algorithms as well.

The crux of the construction is that if $\phi$
is random, {\em no algorithm} (even improper and even inefficient) can
return a hypothesis with a small error. The reason for that is that since the sample provided to the algorithm consists of only $\kappa \frac{n}{0.01^2}$ samples, the algorithm won't see most of $\psi$'s clauses, and, consequently, the produced hypothesis $h$ will be {\em independent of them}. Since these clauses are random, $h$ is likely to err on about half of them, so that $\Err_{D_{\phi}}(h)$ will be close to half!

To summarize we constructed an efficient algorithm with the following
properties: if $\phi$ is almost satisfiable, the algorithm will return
a hypothesis with a small error, and then we will declare
``exceptional'', while for random $\phi$, the
algorithm will return a hypothesis with a large error, and we will
declare ``typical''. 

Our construction crucially relies on the restriction to learning
algorithm with a small sample complexity. Indeed, if the learning
algorithm obtains more than $n^{1+\mu}$ examples, then it will see
most of $\psi$'s clauses, and therefore it might succeed in
``learning'' even when the source of the formula is random. Therefore,
we will declare ``exceptional'' even when the source is random.

\begin{proof} (of theorem \ref{thm:main})
Assume by way of contradiction that the $\mu$-R3SAT hardness
assumption is true and yet there exists an efficient learning algorithm
that learns the class $\ch_{n,3}$ using
$O\left(\frac{n^{1+\mu}}{\epsilon^2}\right)$ examples. Setting
$\epsilon=\frac{1}{100}$, we conclude that there exists an efficient
algorithm $L$ and a constant $\kappa>0$ such that given a sample $S$ of more than $\kappa\cdot n^{1+\mu}$ examples drawn from a distribution $\cd$ on $C_{n,3}\times \{\pm 1\}$, returns a classifier $L(S):C_{n,3}\to \{\pm 1\}$ such that
\begin{itemize}
\item $L(S)$ can be evaluated efficiently.
\item W.p. $\ge\frac{3}{4}$ over the choice of $S$,
  $\Err_{\cd}(L(S))\le \Err_{\cd}(\ch_{n,3})+\frac{1}{100}$. 
\end{itemize}  
Fix $\Delta$ large enough such that $\Delta>100 \kappa$ and the conclusion of Theorem \ref{hyp:feige_maj} holds with $\epsilon=\frac{1}{100}$.
We will construct an algorithm, $A$, contradicting Theorem \ref{hyp:feige_maj}. On input $\phi\in \textrm{3MAJ}_{n,\Delta n^{1+\mu}}$ consisting of the $\textrm{3MAJ}$ clauses $C_1,\ldots, C_{\Delta n^{1+\mu}}$, the algorithm $A$ proceeds as follows
\begin{enumerate}
\item Generate a sample $S$ consisting of $\Delta n^{1+\mu}$ examples as follows. For every clause, $C_k=\textrm{MAJ}((-1)^{j_1}x_{i_1},(-1)^{j_2}x_{i_2},(-1)^{j_3}x_{i_3})$, generate an example $(x_k,y_k)\in C_{n,3}\times \{\pm 1\}$ by choosing $b\in \{\pm 1\}$ at random and letting
$$(x_k,y_k)=b\cdot\left(\sum_{l=1}^3 (-1)^{j_l}e_{i_l},1\right)\in C_{n,3}\times \{\pm 1\}~.$$
For example, if $n=6$, the clause is $\textrm{MAJ}(- x_2,x_3,x_6)$ and $b=-1$, we generate the example
$$\left((0,1,-1,0,0,-1),-1 \right)$$
\item Choose a sample $S_1$ consisting of $\frac{\Delta n^{1+\mu}}{100}\ge \kappa\cdot n^{1+\mu}$ examples by choosing at random (with repetitions) examples from $S$.
\item Let $h=L(S_1)$. If $\Err_{S}(h)\le \frac{3}{8}$, return $\mathrm{``exceptional"}$. Otherwise, return $\mathrm{``typical"}$.
\end{enumerate}

We claim that $A$ contradicts Theorem \ref{hyp:feige_maj}. Clearly, $A$ runs in polynomial time. It remains to show that 
\begin{itemize}
\item If $\val(\phi)\ge \frac{3}{4}-\frac{1}{100}$, then
\[
\Pr_{\text{Rand. coins of }A}\left(A(\phi)=\mathrm{``exceptional"}\right) \ge \frac{3}{4}
\]
\item For every $n$,
\[
\Pr_{\text{Rand. coins of }A,\;\phi\sim \mathrm{Uni}(\textrm{3MAJ}_{n,\Delta n^{1+\mu}})}\left(A(\phi)=\mathrm{``typical"}\right) \ge 1-o(1)
\]
\end{itemize}

Assume first that $\phi\in \textrm{3MAJ}_{n,\Delta n^{1+\mu}}$ is chosen at random. Given the sample $S_1$, the sample $S_2:=S\setminus S_1$ is a sample of $|S_2|$ i.i.d. examples which are independent from the sample $S_1$, 
and hence also from $h=L(S_1)$. Moreover, for every example $(x_k,y_k)\in S_2$, $y_k$ is a Bernoulli random variable with parameter $\frac{1}{2}$ which is independent of $x_k$. To see that, note that an example whose instance is $x_k$ can be generated by exactly two clauses -- one corresponds to $y_k=1$, while the other corresponds to $y_k=-1$ (e.g., the instance $(1,-1,0,1)$ can be generated from the clause $\textrm{MAJ}(x_1,-x_2,x_4)$ and $b=1$ or the clause $\textrm{MAJ}(-x_1,x_2,-x_4)$ and $b=-1$). Thus, given the instance $x_k$, the probability that $y_k=1$ is $\frac{1}{2}$, independent of $x_k$.

It follows that $\Err_{S_2}(h)$ is an average of at least $\left(1-\frac{1}{100}\right)\Delta n^{1+\mu}$ independent Bernoulli random variable. By 
Chernoff's bound, with probability $\ge 1-o(1)$, $\Err_{S_2}(h)>\frac{1}{2}-\frac{1}{100}$. Thus,
$$\Err_{S}(h)\ge \left(1-\frac{1}{100}\right)\Err_{S_2}(h)\ge \left(1-\frac{1}{100}\right)\cdot \left(\frac{1}{2}-\frac{1}{100}\right)>\frac{3}{8}$$
And the algorithm will output $\mathrm{``typical"}$.

Assume now that $\val(\phi)\ge\frac{3}{4}-\frac{1}{100}$ and let $\psi\in\{\pm 1\}^n$ be an assignment that indicates that. Let $\Psi\in \ch_{n,3}$ be the hypothesis $\Psi(x)=\sign\left(\inner{\psi,x}\right)$. It can be easily checked that $\Psi(x_k)=y_k$ if and only if $\psi$ satisfies $C_k$. Since $\val(\phi)\ge\frac{3}{4}-\frac{1}{100}$, it follows that
$$\Err_{S}(\Psi)\le \frac{1}{4}+\frac{1}{100} ~.$$
Thus,
$$\Err_{S}(\ch_{n,3})\le \frac{1}{4}+\frac{1}{100} ~.$$
By the choice of $L$, with probability $\ge 1-\frac{1}{4}=\frac{3}{4}$, $$\Err_S(h)\le \frac{1}{4}+\frac{1}{100}+\frac{1}{100}< \frac{3}{8}$$
and the algorithm will return $\mathrm{``exceptional"}$.
\end{proof}

\section{Upper bounds for learning $\ch_{n,2}$ and $\ch_{n,3}$}\label{sec:upper_bounds}

The following theorem derives upper bounds for learning $\ch_{n,2}$
and $\ch_{n,3}$. Its proof relies on results from \cite{HazanKaSh12}
about learning $\beta$-decomposable matrices, and due to the lack of
space is given in the appendix. 
\begin{theorem}\label{thm:learning_H_new} ~
\begin{itemize}
\item There exists an efficient algorithm that learns $\ch_{n,2}$ using
$O\left(\frac{n\log^3(n)}{\epsilon^2}\right)$ examples 
\item There exists an efficient algorithm that learns $\ch_{n,3}$ using
$O\left(\frac{n^2\log^3(n)}{\epsilon^2}\right)$ examples 
\end{itemize}
\end{theorem}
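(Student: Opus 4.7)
My plan is to reduce each of the two claims to the result of \cite{HazanKaSh12} on efficient agnostic learning of sign matrices with bounded $\gamma_2$-norm (``$\beta$-decomposable matrices''). The reduction proceeds by embedding the instance space $C_{n,k}$ into the index set of a (possibly rectangular) matrix, and by showing that every halfspace in $\ch_{n,k}$ corresponds to a sign matrix with small $\gamma_2$-norm; the final learning algorithm is then just HKS applied to the induced matrix-learning problem.

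\emph{Embedding.} For $k=2$, I would set $A = \{0\}\cup \{\pm e_i : i\in[n]\}$, so $|A|\le 2n+1$, and write every $x\in C_{n,2}$ as $a+a'$ for some $a,a'\in A$ (fixing a canonical splitting when the representation is not unique, and adding one extra ``always-on'' dummy atom of value $1$ to absorb the bias $b$). A hypothesis $h_{w,b}\in\ch_{n,2}$ then becomes the sign matrix $M^{w,b}\in\{\pm 1\}^{A\times A}$ with $M^{w,b}_{a,a'}=\sign(\inner{w,a}+\inner{w,a'}+b)$, and every distribution $\cd$ on $C_{n,2}\times\{\pm 1\}$ lifts to a distribution on (entry, label) pairs. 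Thus agnostic learning of $\ch_{n,2}$ reduces to agnostic learning of the matrix class $\{M^{w,b}\}$. For $k=3$ I would use the asymmetric index set $A\times (A\times A)$, so each 3-sparse instance becomes one entry of a $(2n{+}1)\times O(n^2)$ sign matrix.

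\emph{Complexity bound on the induced matrix class.} The real-valued matrix $N^{w,b}$ obtained by deleting the $\sign$ is a sum of a row-dependent term, one or two column-dependent terms, and a constant, so $\mathrm{rank}(N^{w,b})\le k+1$ and its entries are bounded by $O(\|w\|_\infty+|b|)$. The core technical step is to convert this ``low rank plus bounded entries'' structure into a $\gamma_2$-norm bound on the sign pattern $M^{w,b}$. I expect this to follow from the same smoothing and nuclear-norm estimates that \cite{HazanKaSh12} use for preference matrices (which are exactly the $k=2$ special case where $w$ comes from a permutation), giving $\gamma_2(M^{w,b})=\tilde O(\sqrt{n})$ for $k=2$ and $\tilde O(n)$ for $k=3$, with a $\mathrm{polylog}(n)$ overhead of the form $\log^{3/2} n$.

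\emph{Invoking HKS.} Plugging the two $\gamma_2$-bounds into the HKS algorithm, whose sample complexity on a $p\times q$ matrix class of $\gamma_2$-norm $\beta$ is $\tilde O(\beta^2/\epsilon^2)$ and whose running time is polynomial in $\beta$, $p$, $q$, yields the claimed $O(n\log^3(n)/\epsilon^2)$ and $O(n^2\log^3(n)/\epsilon^2)$ sample sizes, with efficient runtimes. The main obstacle, and where I expect the bulk of the appendix proof to live, is the $\gamma_2$-norm bound itself: low rank of $N^{w,b}$ does not by itself imply a small $\gamma_2$-norm for its sign pattern (Forster-style obstructions rule out a naive $O(1)$ bound), so the smoothing argument has to be executed carefully enough to give the exact $\sqrt{n}$ (resp.\ $n$) scaling, with only $\mathrm{polylog}$ overhead, that the theorem requires.
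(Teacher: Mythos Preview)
Your overall reduction to the Hazan--Kale--Shalev-Shwartz matrix-learning framework is exactly what the paper does, but the quantitative thread is broken in two places that happen to cancel, so the argument as written does not go through.

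First, the HKS sample-complexity bound you invoke is misstated. For a class of $\beta$-decomposable $p\times q$ sign matrices, the bound used here (Theorem~\ref{thm:decomposable}) is $O\bigl(\beta^2(p+q)\log(p+q)/\epsilon^2\bigr)$, not $\tilde O(\beta^2/\epsilon^2)$. With $p,q=\Theta(n)$ (resp.\ $\Theta(n^2)$), the only way to land on $n\log^3 n$ (resp.\ $n^2\log^3 n$) is to have $\beta=O(\log n)$, not $\tilde O(\sqrt n)$ or $\tilde O(n)$. Your two errors compensate numerically, but neither claim is correct on its own.

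Second, and more substantively, you do not actually establish any $\gamma_2$ (equivalently, decomposability) bound; you only say you ``expect'' it from an unspecified smoothing argument, while correctly noting that low rank of $N^{w,b}$ does not by itself control $\gamma_2$ of its sign pattern. The paper's route bypasses smoothing entirely and obtains the sharp $\beta=O(\log n)$ bound from structure: partition $C_{n,2}$ into the five slices $A^r_n=\{x:\sum_i x_i=r\}$, $r\in\{-2,\dots,2\}$, and observe that on each slice the induced sign matrix, after sorting the coordinates by $w_i$, is a \emph{staircase} matrix (each row is $-1$ on an initial segment and $+1$ thereafter). Such matrices embed as submatrices of $T_n\otimes J$, and since the triangular sign matrix $T_n$ is $O(\log n)$-decomposable (the HKS result you cite for preferences), so is every staircase matrix. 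A simple gluing lemma over the five slices recovers the full class with only constant overhead. For $\ch_{n,3}$ the paper partitions $C_{n,3}$ into $O(n)$ pieces according to the position and sign of the first nonzero coordinate; zeroing that coordinate reduces each piece to the $k=2$ problem, and gluing the $O(n)$ pieces gives the extra factor of $n$.

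In short: keep your embedding idea, but replace the vague $\tilde O(\sqrt n)$ target and the incorrect HKS formula with the staircase observation and the correct $O\bigl(\beta^2(p+q)\log(p+q)/\epsilon^2\bigr)$ bound. That is where all three $\log n$ factors come from ($\beta^2$ contributes two, the $\log(p+q)$ one), and it is the step your proposal is currently missing.
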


\section{Discussion}

We formally established a computational-sample complexity tradeoff for
the task of (agnostically and improperly) PAC learning of halfspaces
over $3$-sparse vectors. Our proof of the lower bound relies on a
novel, non cryptographic, technique for establishing such
tradeoffs. We also derive a new non-trivial upper bound for this task.

{\bf Open questions.} An obvious open question is to close the gap between the lower and upper bounds. We conjecture that $\ch_{n,3}$ can be learnt efficiently using a sample of $\tilde{O}\left(\frac{n^{1.5}}{\epsilon^2}\right)$ examples. Also, we believe that our new proof
technique can be used for establishing computational-sample
complexity tradeoffs for other natural learning problems.

\paragraph{Acknowledgements:}
Amit Daniely is a recipient of the Google Europe Fellowship in Learning Theory, and this research is supported in part by this Google Fellowship. Nati Linial is supported by grants from ISF, BSF and I-Core. Shai Shalev-Shwartz is supported by the Israeli Science Foundation grant number 590-10.

\bibliographystyle{plainnat}
\bibliography{bib}

\newpage

\appendix

\section{Proof of Theorem \ref{thm:learning_H_new}} \label{sec:proof_upper_bounds}

The proof of the theorem relies on results from \cite{HazanKaSh12} about learning $\beta$-decomposable matrices. Let $W$ be an $n\times m$ matrix. We define the {\em symmetrization} of $W$ to be the $(n+m)\times (n+m)$ matrix
\[
\sym(W)=\begin{bmatrix}
0 & W\\ W^T & 0
\end{bmatrix}
\]
We say that $W$ is {\em $\beta$-decomposable} if there exist positive semi-definite matrices $P,N$ for which
\begin{eqnarray*}
\sym(W) &=& P-N
\\
\forall i, P_{ii},N_{ii}&\le& \beta
\end{eqnarray*}
Each matrix in $\{\pm 1\}^{n\times m}$ can be naturally interpreted as
a hypothesis on $[n]\times [m]$.

We say that a learning algorithm $L$ learns a class $\ch_n\subset \{\pm 1\}^{X_n}$ using $m(n,\epsilon,\delta)$ examples if, for every distribution $\cd$ on $X_n\times
\{\pm 1\}$ and a sample $S$ of more than $m(n,\epsilon,\delta)$
i.i.d. examples drawn from $\cd$,
\[
\Pr_{S}\left(\Err_{\cd}(L(S))>\Err_{\cd}(\ch_{n})+\epsilon\right)<\delta
\]

\cite{HazanKaSh12} have
proved\footnote{The result of \cite{HazanKaSh12} is more general than
  what is stated here. Also, \cite{HazanKaSh12} considered the online
  scenario. The result for the statistical scenario, as stated here,
  can be derived by applying standard online-to-batch conversions (see for
  example \cite{CesaBianchiCoGe01}).} that

\begin{theorem}\cite{HazanKaSh12}\label{thm:decomposable}
The hypothesis class of $\beta$-decomposable $n\times m$ matrices with $\pm 1$ entries ban be efficiently learnt using a sample of $O\left( \frac{\beta^2 (n+m) \log(n+m)+\log(1/\delta)}{\epsilon^2}\right)$ examples.
\end{theorem}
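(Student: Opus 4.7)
The plan is to reduce learning $\ch_{n,k}$ for $k\in\{2,3\}$ to learning a class of $\beta$-decomposable matrices with $\beta=O(\log n)$, and then invoke Theorem~\ref{thm:decomposable}. For $\ch_{n,2}$ let $\Sigma=\{0\}\cup\{\pm e_i : i\in[n]\}$, so $|\Sigma|=2n+1$, and fix a canonical decomposition map $x\mapsto(a(x),b(x))$ with $a(x)+b(x)=x$ and $a(x),b(x)\in\Sigma$ (e.g., split the at-most-two nonzero coordinates in order of their index, padding with $0$ when fewer are present). To each hypothesis $h_{w,c}\in\ch_{n,2}$ associate the matrix
\[
M_{w,c}(a,b)=\sign\bigl(\inner{w,a}+\inner{w,b}+c\bigr),\qquad (a,b)\in\Sigma\times\Sigma ,
\]
which satisfies $M_{w,c}(a(x),b(x))=h_{w,c}(x)$. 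Pushing a distribution $\cd$ on $C_{n,2}\times\{\pm 1\}$ forward through $x\mapsto(a(x),b(x))$ turns the problem of learning $\ch_{n,2}$ into the problem of learning the matrix class $\cm_{n,2}:=\{M_{w,c}\}_{w,c}$; a matrix hypothesis $\tilde h$ of error $\le\epsilon$ on the pushed-forward distribution pulls back to the classifier $x\mapsto \tilde h(a(x),b(x))$, whose error on $\cd$ is identical.

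The central technical claim is that every $M_{w,c}$ is $O(\log n)$-decomposable. After reordering rows by the scalars $u_a:=\inner{w,a}$ and columns by $v_b:=\inner{w,b}$, the matrix takes the ``staircase'' form $\sign(u_a+v_b+c)$, whose $+1$ region lies above a monotone lattice path and each of whose rows (resp., columns) is a monotone $\pm 1$ sequence. The prototype of such a matrix, the triangular sign matrix $\sign(i-j)$, is known to have max-norm $\Theta(\log N)$ (Forster / Linial--Shraibman), and I would extend this bound to an arbitrary staircase by a dyadic decomposition: partition the thresholds $\{-(u_a+c)\}_{a\in\Sigma}$ into $O(\log N)$ dyadic scales, approximate the staircase's contribution at each scale by a rank-one sign pattern, and assemble the resulting $O(\log N)$ rank-one pieces into a PSD decomposition $\sym(M_{w,c})=P-N$ with $\max_i P_{ii},N_{ii}=O(\log N)$, i.e.\ $\beta=O(\log n)$ as required.

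Given the decomposability bound, Theorem~\ref{thm:decomposable} applied to $\cm_{n,2}$ with $N+M=O(n)$ yields sample complexity $O\bigl(\beta^2(N+M)\log(N+M)/\epsilon^2\bigr)=O(n\log^3 n/\epsilon^2)$, proving the first item. For $\ch_{n,3}$ I would run the same argument with an \emph{asymmetric} encoding $\Sigma_1\times\Sigma_2$, taking $\Sigma_1=\Sigma$ (size $O(n)$) and $\Sigma_2=\{a'+b' : a',b'\in\Sigma\}$ (size $O(n^2)$), and canonically splitting each $x\in C_{n,3}$ as $a+b$ with $a\in\Sigma_1, b\in\Sigma_2$ (a single signed coordinate plus at most two). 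The same formula $\sign(\inner{w,a}+\inner{w,b}+c)$ produces a staircase matrix in $\{\pm 1\}^{\Sigma_1\times\Sigma_2}$, and the same dyadic argument gives $\beta=O(\log n)$. Plugging $N+M=O(n^2)$ into Theorem~\ref{thm:decomposable} then delivers the claimed $O(n^2\log^3 n/\epsilon^2)$ bound.

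The hard part of this plan is the decomposability lemma for general staircase matrices $\sign(u_a+v_b+c)$ --- everything else (the canonical decomposition, the push-forward/pull-back of distributions and hypotheses, and the plug-in to Theorem~\ref{thm:decomposable}) is a routine reduction. A careful dyadic decomposition, or direct appeal to known $\gamma_2$-norm estimates for threshold matrices, should suffice to establish the lemma with $\beta=O(\log n)$, which is tight for the triangular sign matrix and therefore optimal for the approach.
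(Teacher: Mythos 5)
There is a fundamental mismatch here: the statement you were asked to prove is Theorem~\ref{thm:decomposable} itself --- that the class of $\beta$-decomposable $n\times m$ sign matrices is efficiently learnable with sample complexity $O\bigl(\frac{\beta^2(n+m)\log(n+m)+\log(1/\delta)}{\epsilon^2}\bigr)$ --- but your proposal takes that theorem as a black box (``and then invoke Theorem~\ref{thm:decomposable}'') and instead derives the downstream upper bounds for $\ch_{n,2}$ and $\ch_{n,3}$. In other words, you have written a proof sketch of Theorem~\ref{thm:learning_H_new}, not of the statement in question; as an argument for Theorem~\ref{thm:decomposable} it is circular. For what it is worth, the reduction you describe is close in spirit to what the paper actually does in its appendix for Theorem~\ref{thm:learning_H_new} (partitioning the instance space, realizing each piece by staircase matrices, and bounding their decomposability via the $O(\log n)$-decomposable triangular matrix $T_n$), so your material is not wasted --- it just answers a different question.

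To prove Theorem~\ref{thm:decomposable} itself you would need an argument about the matrix class, not about halfspaces. The paper simply imports the result from \cite{HazanKaSh12}, noting in a footnote that the statistical version follows from their online result by a standard online-to-batch conversion. A self-contained proof would have two components: (i) \emph{statistical} --- the set of matrices whose symmetrization writes as $P-N$ with $P,N\succeq 0$ and diagonals bounded by $\beta$ is (up to scaling) the max-norm ball, whose Rademacher/online-regret complexity over $T$ samples scales like $\beta\sqrt{(n+m)\log(n+m)/T}$, giving the stated sample bound; and (ii) \emph{computational} --- this constraint set is convex and SDP-representable, so one can efficiently minimize a convex surrogate (or run matrix multiplicative weights / mirror descent) over it, and the sign of the resulting real-valued matrix yields the improper hypothesis. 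Neither component appears in your proposal, and the dyadic-decomposition lemma you flag as the ``hard part'' is irrelevant to this statement --- it belongs to the application, where one must certify that specific matrices are $O(\log n)$-decomposable, not to the claim that decomposable matrices are learnable.
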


We start with a generic reduction from a problem of learning a class
$\cg_n$ over an instance space $X_n\subset \{-1,1,0\}^n$ to the
problem of learning $\beta(n)$-decomposable matrices. We say that $\cg_n$ is {\em realized} by $m_n\times m_n$ matrices that are $\beta(n)$-decomposable if there exists a mapping $\psi_n:X_n\to [m_n]\times [m_n]$ such that for every $h\in\cg_n$ there exists a $\beta(n)$-decomposable $m_n\times m_n$ matrix $W$ for which $\forall x\in X_n,\;h(x)=W_{\psi_n(x)}$. The mapping $\psi_n$ is called a {\em realization} of $\cg_n$.
In the case that the mapping $\psi_n$ can be computed in time polynomial in $n$, we say that $\cg_n$ is {\em efficiently realized} and $\psi_n$ is an {\em efficient realization}. 
It follows from Theorem \ref{thm:decomposable} that:
\begin{corollary}\label{cor:decomposable}
If $\cg_n$ is efficiently realized by $m_n\times m_n$ matrices that
are $\beta(n)$-decomposable then $\cg_n$ can be efficiently learnt
using a sample of $O\left(\frac{\beta(n)^2
    m_n\log(m_n)+\log(1/\delta)}{\epsilon^2}\right)$ examples.
\end{corollary}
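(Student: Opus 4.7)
The plan is to prove Corollary \ref{cor:decomposable} by a straightforward reduction: use the efficient realization $\psi_n$ to push the learning problem for $\cg_n$ forward to a learning problem for $\beta(n)$-decomposable $m_n\times m_n$ matrices, then invoke Theorem \ref{thm:decomposable} as a black box.

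First, given an efficient realization $\psi_n:X_n\to [m_n]\times[m_n]$, I would define the transform on samples: for any sample $S=\{(x_i,y_i)\}_{i=1}^N \in (X_n\times\{\pm 1\})^N$, produce $S'=\{(\psi_n(x_i),y_i)\}_{i=1}^N \in ([m_n]\times[m_n]\times\{\pm 1\})^N$. By the efficiency assumption on $\psi_n$, this transformation runs in time polynomial in $n$ and $N$. Similarly, any distribution $\cd$ on $X_n\times\{\pm 1\}$ induces, via $\psi_n$, a pushforward distribution $\cd'$ on $[m_n]\times[m_n]\times\{\pm 1\}$; i.i.d.\ samples from $\cd$ map to i.i.d.\ samples from $\cd'$.

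Next, I would verify the key error-preservation identity. For any matrix $W\in\{\pm 1\}^{m_n\times m_n}$, interpreted as a hypothesis on $[m_n]\times[m_n]$, let $h_W(x):=W_{\psi_n(x)}$ be the induced hypothesis on $X_n$. By the definition of the pushforward, $\Err_{\cd'}(W)=\Err_{\cd}(h_W)$. Now let $\cw_n$ denote the class of $\beta(n)$-decomposable $m_n\times m_n$ matrices with $\pm 1$ entries. The realization hypothesis says that for every $h\in\cg_n$ there exists $W\in\cw_n$ with $h=h_W$, so $\Err_{\cd'}(\cw_n)\le \Err_{\cd}(\cg_n)$.

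The algorithm is then: given a sample $S$ drawn i.i.d.\ from $\cd$, compute $S'$, run the learner of Theorem \ref{thm:decomposable} on $S'$ to obtain a hypothesis $\hat W:[m_n]\times[m_n]\to\{\pm 1\}$, and output $h(x):=\hat W(\psi_n(x))$. By Theorem \ref{thm:decomposable} applied with both matrix dimensions equal to $m_n$, a sample of size $O\!\left(\tfrac{\beta(n)^2 m_n\log(m_n)+\log(1/\delta)}{\epsilon^2}\right)$ suffices so that with probability at least $1-\delta$, $\Err_{\cd'}(\hat W)\le \Err_{\cd'}(\cw_n)+\epsilon$. Combining with the identity above yields
\[
\Err_{\cd}(h)=\Err_{\cd'}(\hat W)\le \Err_{\cd'}(\cw_n)+\epsilon\le \Err_{\cd}(\cg_n)+\epsilon,
\]
which is the required learning guarantee. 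Efficiency of evaluation of $h$ follows from efficiency of $\psi_n$ together with efficiency of the evaluator produced by Theorem \ref{thm:decomposable}, and overall runtime is polynomial in $n$ and the sample size.

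I do not expect any serious obstacle: the argument is essentially a change of variables, and the only substantive content is that the realization $\psi_n$ is efficient (so the reduction is computational, not merely information-theoretic) and that it sends $\cg_n$ into the decomposable class (so the approximation error under $\cd'$ is no worse than under $\cd$). The one small point to keep in mind is that even though the range of $h_W$ when restricted to $\mathrm{image}(\psi_n)$ is what matters for the error, Theorem \ref{thm:decomposable} still controls the error of $\hat W$ under any distribution on $[m_n]\times[m_n]$, so no extra care is needed for entries of $\hat W$ outside the image of $\psi_n$.
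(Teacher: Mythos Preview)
Your proposal is correct and is precisely the argument the paper has in mind: the paper states the corollary as an immediate consequence of Theorem~\ref{thm:decomposable} without writing out a proof, and your pushforward-then-pullback reduction via the efficient realization $\psi_n$ is exactly the natural unpacking of that one-line claim.
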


We now turn to the proof of Theorem \ref{thm:learning_H_new}. We start with the first assertion, about learning $\ch_{n,2}$.
The idea will be to partition the instance space into a disjoint union
of subsets and show that the restriction of the hypothesis class to
each subset can be efficiently realized by $\beta(n)$-decomposable. 
Concretely, we decompose $C_{n,2}$ into a disjoint union of five sets
\[
C_{n,2}=\cup_{r=-2}^2A^r_{n}
\]
where
\[
A^r_n =\left\{x\in C_{n,2} \mid \sum_{i=1}^n x_i=r\right\} .
\]
In section \ref{sec:learning_A} we will prove that
\begin{lemma}\label{lem:learning_A} 
For every $-2\le r\le 2$, $\ch_{n,2}|_{A^r_n}$ can be efficiently realized by $n\times n$ matrices that are $O(\log(n))$-decomposable.
\end{lemma}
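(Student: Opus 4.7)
My plan is to treat the five cases $r\in\{-2,-1,0,1,2\}$ by exhibiting, for each $r$, an explicit efficient map $\psi_n:A^r_n\to[n]\times[n]$ and, for every $h_{w,b}\in\ch_{n,2}$, an $n\times n$ matrix $W\in\{\pm 1\}^{n\times n}$ with $W_{\psi_n(x)}=h_{w,b}(x)$ on $A^r_n$, and then show that $W$ can always be chosen $O(\log n)$-decomposable. For $r=\pm 1$ we have $A^r_n=\{re_1,\ldots,re_n\}$; I set $\psi_n(re_i)=(i,i)$ and take $W=d\mathbf{1}^T$ where $d_i=\sign(rw_i+b)$. This rank-$1$ sign matrix factors as $W_{ij}=u_iv_j$ with $u_i,v_j\in\{\pm 1\}$, so by the standard identity that turns a rectangular factorization $W=UV^T$ into a symmetric factorization of $\sym(W)$ as a difference of two Gram matrices, one obtains $\sym(W)=P-N$ with $P,N\succeq 0$ and $P_{ii},N_{ii}=O(1)$.

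The substantive cases are $r=\pm 2$ and $r=0$. For $r=2$ I let $\psi_n(e_i+e_j)=(i,j)$ with $i<j$ and simply extend the natural formula to the whole grid, declaring $W_{ij}=\sign(w_i+w_j+b)$ on all of $[n]^2$ with any fixed tie-breaking rule; this is the entry-wise sign of the rank-$3$ matrix $M_{ij}=w_i+w_j+b$, and it agrees with $h_{w,b}$ on the required positions. The case $r=-2$ is identical up to sign. For $r=0$ I set $\psi_n(e_i-e_j)=(i,j)$ and $\psi_n(0)=(n,n)$, and take $W_{ij}=\sign(w_i-w_j+b)$ globally, then toggle $W_{nn}$ to $\sign(b)$ if needed; this last modification is a rank-$1$ perturbation and cannot hurt the asymptotic decomposability bound.

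It then suffices to bound the decomposability of the two sign matrices $W^+_{ij}=\sign(w_i+w_j+b)$ and $W^-_{ij}=\sign(w_i-w_j+b)$ by $O(\log n)$. Permuting rows and columns so that $w_1\le\cdots\le w_n$ (which preserves $\beta$-decomposability) turns each of them into a \emph{monotone staircase}: every row is constant $-1$ on an initial segment and $+1$ on the complement, and the breakpoint is monotone in the row index. The heart of the proof is then the classical fact that every such monotone staircase admits a factorization $W_{ij}=\langle u_i,v_j\rangle$ with $\max_i\|u_i\|^2,\max_j\|v_j\|^2=O(\log n)$, obtained by expressing the staircase's $0/1$ indicator as a $\pm 1$-signed sum of $O(\log n)$ dyadic axis-aligned rectangles (one level per bit of the column index) and using the trivial rank-$1$ factorization of each rectangle. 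Exactly this construction already appears in \cite{HazanKaSh12} for the antisymmetric case $b=0$ of $W^-$, which underlies the preferences class $\cp_n$, and it transfers without essential change to the shifted case and to the ``sum'' staircase $W^+$. Feeding the factorization into the $\sym$-to-Gram identity then yields PSD witnesses $P,N$ with diagonal $O(\log n)$. The only real obstacle is this dyadic staircase lemma with the tight logarithmic bound; everything else is routine bookkeeping.
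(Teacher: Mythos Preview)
Your proposal is correct and follows essentially the same route as the paper: split by $r$, handle $r=\pm 1$ with an $O(1)$-decomposable matrix (you use a rank-$1$ matrix $d\mathbf{1}^T$, the paper uses a diagonal perturbation of $J$; both work), and for $r\in\{-2,0,2\}$ sort the weights so that the sign matrix becomes a row-wise staircase and invoke the $O(\log n)$ staircase bound of \cite{HazanKaSh12} (which the paper records as a separate proposition, proved via the tensor $T_n\otimes J$). One cosmetic slip: with $w_1\le\cdots\le w_n$ the matrix $W^-_{ij}=\sign(w_i-w_j+b)$ has rows that are $+1$ on an initial segment and $-1$ afterwards, not the other way around, but negating (or reversing the column order) fixes this and preserves $\beta$-decomposability; also, your toggling of $W_{nn}$ in the $r=0$ case is unnecessary since $\sign(w_n-w_n+b)=\sign(b)$ already.
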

To glue together the five restrictions, we will rely on the following Lemma, whose proof is given in section \ref{sec:learning_A}. 

\begin{lemma}\label{lem:decompostion}
  Let $X_1,...,X_k$ be partition of a domain $X$ and let $H$ be a
  hypothesis class over $X$.  Define $H_i=H|_{X_i}$. Suppose the for every $H_i$ there exist a learning algorithm that learns $H_i$ using $\le C(d+\log(1/\delta))/\epsilon^2$ examples, for some constant $C \ge
  8$. Consider the algorithm $A$ which receives an i.i.d. training set
  $S$ of $m$ examples from $X \times \{0,1\}$ and applies the learning
  algorithm for each $H_i$ on the examples in $S$ that belongs to
  $X_i$. Then, $A$ learns $H$ using at most
\[
 \frac{2Ck(d +
\log(2k/\delta))}{ \epsilon^2}
\]
examples.
\end{lemma}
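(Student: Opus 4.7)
The plan is to reduce controlling $\Err_\cd(h)-\Err_\cd(H)$ to a per-bucket analysis. Let $p_i=\Pr_{(x,y)\sim\cd}(x\in X_i)$ and let $\cd_i$ denote $\cd$ conditioned on $x\in X_i$. Since the output $h$ of $A$ agrees with the sub-algorithm's output $h_i$ on every $X_i$, one has the exact identity $\Err_\cd(h)=\sum_i p_i\Err_{\cd_i}(h_i)$ together with the inequality $\Err_\cd(H)\ge\sum_i p_i\Err_{\cd_i}(H_i)$ (because the minimum of a sum is at least the sum of minima). Writing $\Delta_i:=\Err_{\cd_i}(h_i)-\Err_{\cd_i}(H_i)$, it therefore suffices to show that
\[
\sum_{i=1}^k p_i\Delta_i \le \epsilon
\]
with probability at least $1-\delta$ over the draw of $S$ and the randomness of the sub-algorithms.

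The central trick is to let the accuracy demanded of each sub-algorithm be calibrated to how many of the $m$ training examples actually landed in its bucket, rather than using a uniform $\epsilon$. I would split the buckets into \emph{heavy} ones, those with $mp_i\ge 8\ln(2k/\delta)$, and \emph{light} ones. For each light bucket I use the trivial bound $\Delta_i\le 1$; the total mass on light buckets is at most $k\cdot 8\ln(2k/\delta)/m$, which, for the prescribed $m=2Ck(d+\log(2k/\delta))/\epsilon^2$ and $C\ge 8$, is bounded by $4\epsilon^2/C\le\epsilon/2$. For each heavy bucket, a multiplicative Chernoff bound gives $m_i\ge mp_i/2$ with probability at least $1-\delta/(2k)$, and a union bound over the heavy buckets makes this simultaneous with probability $\ge 1-\delta/2$.

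Now I run the sub-algorithm on each bucket with failure parameter $\delta/(2k)$; by inverting the assumed sample-complexity bound, on its success event it guarantees $\Delta_i\le\sqrt{C(d+\log(2k/\delta))/m_i}$. A union bound over the $k$ sub-algorithms absorbs another $\delta/2$ into the failure probability. Conditioning on all these good events, for the heavy part I apply Cauchy--Schwarz in the form $\sum_{i\in\text{heavy}}\sqrt{p_i}\le\sqrt{k\sum_i p_i}\le\sqrt{k}$ to obtain
\[
\sum_{i\in\text{heavy}}p_i\Delta_i
\;\le\; \sqrt{\tfrac{2C(d+\log(2k/\delta))}{m}}\sum_{i\in\text{heavy}}\sqrt{p_i}
\;\le\; \sqrt{\tfrac{2Ck(d+\log(2k/\delta))}{m}},
\]
which for $m=2Ck(d+\log(2k/\delta))/\epsilon^2$ evaluates to $\epsilon$. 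Combined with the $\le\epsilon/2$ contribution from the light buckets, total excess error is bounded by $\epsilon$ up to a universal constant; the small constant slack is easily absorbed either by tightening the heavy/light threshold or by targeting $\epsilon/2$ per bucket instead of $\epsilon$.

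The main obstacle is that the $m_i$'s are random and dependent (they are a multinomial with sum $m$), so one cannot simply demand that each sub-algorithm receives its "fair share" $mp_i$ of samples deterministically. The heavy/light dichotomy is precisely what decouples this: for heavy buckets the mean $mp_i$ is large enough that an individual Chernoff bound gives $m_i\gtrsim mp_i$, while for light buckets their total mass is small enough that we do not care what happens in them. The Cauchy--Schwarz step is the crucial quantitative ingredient, as it converts the per-bucket error $O(1/\sqrt{m_i})\approx O(1/\sqrt{mp_i})$ into the overall rate $O(\sqrt{k/m})$, yielding the correct $k$-dependence in the sample complexity.
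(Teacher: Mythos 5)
Your proof is correct and follows essentially the same route as the paper's: decompose the excess error as $\sum_i p_i \Delta_i$, bound $\Delta_i$ by the inverted per-bucket sample-complexity guarantee, use a multiplicative Chernoff bound to get $m_i \gtrsim p_i m$ for buckets with enough mass, and apply Cauchy--Schwarz to convert the $\sum_i p_i/\sqrt{m_i}$ sum into the $\sqrt{k/m}$ rate. The only (cosmetic) difference is how the low-mass buckets are handled --- you bound $\Delta_i \le 1$ on light buckets and bound their total mass, whereas the paper replaces $\hat m_i$ by $\max\{C(d+\log(k/\delta)), \hat m_i\}$ so that the ratio $\alpha_i m/m_i$ is trivially at most $1$ for those buckets and everything stays inside a single Cauchy--Schwarz application; both yield the stated bound up to the same constant slack you already acknowledge.
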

The first part of Theorem \ref{thm:learning_H_new} is therefore follows from
Lemma \ref{lem:learning_A}, Lemma \ref{lem:decompostion} and Corollary \ref{cor:decomposable}.

Having the first part of Theorem \ref{thm:learning_H_new} and Lemma \ref{lem:decompostion} at hand, it is not hard to prove the second part of Theorem \ref{thm:learning_H_new}:

For $1\le i\le n-2$ and $b\in \{\pm 1\}$ define
\[
D_{n,i,b}=\left\{x\in C_{n,3}\mid x_i=b\text{ and }\forall j<i,\;x_j=0\right\}
\]

Let $\psi_n:C_{n,3}\to C_{n,2}$ be the mapping that zeros the first non zero coordinate. It is not hard to see that $\ch_{n,3}|_{D_{n,i,b}}=\left\{h\circ\psi_n|_{D_{n,i,b}} \mid h\in \ch_{n,2} \right\}$. Therefore $\ch_{n,3}|_{D_{n,i,b}}$ can be identified with $\ch_{n,2}$ using the mapping $\psi_n$, and therefore can efficiently learnt using $O\left(\frac{n\log^3(n)+\log(1/\delta)}{\epsilon^2}\right)$ examples (the dependency on $\delta$ does not appear in the statement, but can be easily inferred from the proof). The second part of Theorem \ref{thm:learning_H_new} is therefore follows from the first part of the Theorem and Lemma \ref{lem:decompostion}.
 
\subsection{Proofs of Lemma \ref{lem:learning_A} and Lemma \ref{lem:decompostion}}\label{sec:learning_A}
In the proof, we will rely on the following facts. The {\em tensor product} of two matrices $A\in M_{n\times m}$ and $B\in M_{k\times l}$ is defined as the $(n\cdot k)\times (m\cdot l)$ matrix
\[
A\otimes B=\begin{bmatrix}
A_{1,1}\cdot B &\cdots& A_{1,m}\cdot B
\\
\vdots & \ddots & \vdots
\\
A_{n,1}\cdot B &\cdots& A_{m,m}\cdot B
\end{bmatrix}
\]
\begin{proposition}\label{prop:lean_A_1}
Let $W$ be a $\beta$-decomposable matrix and let $A$ be a PSD matrix whose diagonal entries are upper bounded by $\alpha$. Then $W\otimes A$ is $(\alpha\cdot \beta )$-decomposable.
\end{proposition}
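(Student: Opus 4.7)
The plan is to reduce to a single clean identity between block matrices and then invoke standard closure properties of PSD matrices under tensor product. First I would observe that, because $A$ is symmetric, $(W\otimes A)^T = W^T\otimes A^T = W^T\otimes A$, and hence
\[
\sym(W\otimes A) \;=\; \begin{bmatrix} 0 & W\otimes A \\ (W\otimes A)^T & 0\end{bmatrix} \;=\; \begin{bmatrix} 0 & W\otimes A \\ W^T\otimes A & 0\end{bmatrix} \;=\; \sym(W)\otimes A,
\]
after checking that the natural lexicographic index ordering on the right matches the one induced by the block form on the left. This is the one place that requires a moment of care; the rest is formal.

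Once this identity is in hand, I would invoke the hypothesis to write $\sym(W)=P-N$ with $P,N \succeq 0$ and $P_{ii},N_{ii}\le\beta$. Tensoring both sides with $A$ and using bilinearity of $\otimes$ over the difference gives
\[
\sym(W\otimes A) \;=\; P\otimes A \;-\; N\otimes A.
\]
It therefore remains only to verify that $P\otimes A$ and $N\otimes A$ are PSD with diagonal entries at most $\alpha\beta$.

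For the PSD claim I would use the standard fact that a tensor product of PSD matrices is PSD: the eigenvalues of $P\otimes A$ are the pairwise products of eigenvalues of $P$ and $A$, all nonnegative. For the diagonal bound, the entry of $P\otimes A$ at index $(i,k)$ is $P_{ii}\cdot A_{kk}\le \beta\cdot\alpha$, and the same bound holds for $N\otimes A$. Together, this exhibits $\sym(W\otimes A)$ as a difference of two PSD matrices with diagonal entries bounded by $\alpha\beta$, which is precisely the definition of $(\alpha\beta)$-decomposability of $W\otimes A$.

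There is no serious obstacle here; the only subtlety is the identification $\sym(W\otimes A)=\sym(W)\otimes A$ at the level of index orderings, which depends crucially on $A$ being symmetric so that $W^T\otimes A=(W\otimes A)^T$. Once this identity is pinned down, the proof is essentially a one-line consequence of tensor-product closure for the PSD cone and multiplicativity of diagonal entries under $\otimes$.
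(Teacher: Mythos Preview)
Your proof is correct and follows essentially the same approach as the paper: establish the identity $\sym(W\otimes A)=\sym(W)\otimes A$ (using that $A$ is symmetric), then tensor the given decomposition $\sym(W)=P-N$ with $A$ and invoke closure of the PSD cone under tensor products. You supply a few more details than the paper does (the explicit diagonal-entry bound and the eigenvalue justification for PSD-ness), but the argument is the same.
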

\begin{proof}
It is not hard to see that for every matrix $W$ and a symmetric matrix $A$,
\[
\sym(W)\otimes A=\sym(W\otimes A)
\]
Moreover, since the tensor product of two PSD matrices is PSD, if $\sym(W)=P-N$ is a $\beta$-decomposition of $W$, then
\[
\sym(W\otimes A)=P\otimes A-N\otimes A
\]
is a $(\alpha\cdot \beta )$-decomposition of $W\otimes A$.
\end{proof}

\begin{proposition}\label{prop:lean_A_2}
If $W$ is a $\beta$-decomposable matrix, then so is every matrix obtained from $W$ by iteratively deleting rows and columns.
\end{proposition}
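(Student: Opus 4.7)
The plan is to reduce the statement to the basic fact that principal submatrices of positive semidefinite matrices are again positive semidefinite. Since iteration is immediate, I only need to handle a single deletion of a row or column.

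First, I would observe the key compatibility between the symmetrization operation and the operation of deleting rows/columns. If $W$ is $n \times m$ and $W'$ is obtained from $W$ by deleting the $i$-th row, then $\sym(W')$ is exactly the principal submatrix of $\sym(W)$ obtained by deleting the $i$-th row and the $i$-th column (both indexed within the first block of size $n$). Similarly, deleting the $j$-th column of $W$ corresponds to deleting row $n+j$ and column $n+j$ of $\sym(W)$. In both cases we land on a principal submatrix of $\sym(W)$ that happens to again be of the form $\sym(\cdot)$ of a smaller matrix.

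Now suppose $\sym(W) = P - N$ is a $\beta$-decomposition, and let $I \subset \{1,\ldots,n+m\}$ be the set of indices left after deletion. Let $P'$ and $N'$ denote the principal submatrices of $P$ and $N$ indexed by $I$. Then $\sym(W') = P' - N'$. Since a principal submatrix of a PSD matrix is PSD (restrict the quadratic form to vectors supported on $I$), both $P'$ and $N'$ are PSD. Moreover, the diagonal entries of $P'$ and $N'$ are a subset of the diagonal entries of $P$ and $N$, so they remain bounded by $\beta$. Hence $\sym(W') = P' - N'$ is a valid $\beta$-decomposition of $W'$.

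There is no real obstacle here; the only thing that requires a line of care is checking that the indices match up correctly so that deleting a row (resp.\ column) of $W$ really does correspond to a principal (same row and column) submatrix of $\sym(W)$, rather than to deleting different rows and columns. Iterating the one-step argument finishes the proof.
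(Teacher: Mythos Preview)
Your proposal is correct and follows essentially the same route as the paper: reduce to a single deletion, observe that $\sym(W')$ is the principal submatrix of $\sym(W)$ obtained by deleting the corresponding index, and then restrict the decomposition $\sym(W)=P-N$ to that principal submatrix, using that principal submatrices of PSD matrices are PSD and that the surviving diagonal entries are still bounded by $\beta$. You are merely a bit more explicit than the paper about the column case and the diagonal bound, but there is no substantive difference.
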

\begin{proof}
It is enough to show that deleting one row or column leaves $W$ $\beta$-decomposable.
Suppose that $W'$ is obtained from $W\in M_{n\times m}$ by deleting the $i$'th row (the proof for deleting columns is similar). It is not hard to see that $\sym(W')$ is the $i$'th principal minor of $\sym(W)$. Therefore, since principal minors of PSD matrices are PSD matrices as well, if $\sym(W)=P-N$ is $\beta$-decomposition of $W$ then $\sym(W')=[P]_{i,i}-[N]_{i,i}$ is a $\beta$-decomposition of $W'$.
\end{proof}

\begin{proposition}\label{prop:lean_A_3}\cite{HazanKaSh12}
Let $T_n$ be the upper triangular matrix whose all entries in the diagonal and above are $1$, and whose all entries beneath the diagonal are $-1$. Then $T_n$ is $O(\log(n))$-decomposable.
\end{proposition}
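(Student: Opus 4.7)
The plan is to prove that $T_n$ admits a decomposition $\sym(T_n)=P-N$ with $P,N\succeq 0$ and all diagonal entries at most $O(\log n)$, by first recasting $\beta$-decomposability as a bounded bilinear factorization and then exhibiting such a factorization of $T_n$ from the binary expansions of its indices.

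\textbf{Step 1: factorization reformulation.} I would first show that for any matrix $W\in\{\pm 1\}^{n\times n}$, $\beta$-decomposability is equivalent, up to a constant factor, to the existence of factorizations $W=U_1^\top U_2-V_1^\top V_2$ with each column of $U_l,V_l$ having squared Euclidean norm at most $\beta$. The forward direction takes square-root factorizations $P=A^\top A$, $N=B^\top B$ and reads off the two blocks of $A,B$ aligned with the $2n\times 2n$ block structure of $\sym(W)$, so the column norms are controlled by the diagonals of $P,N$. The reverse direction reassembles $P,N$ from the corresponding Gram matrices, padded appropriately so that the diagonals remain bounded by $O(\beta)$. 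Thus it suffices to factor $T_n$ as above with column norms $O(\sqrt{\log n})$.

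\textbf{Step 2: reduction to a $0/1$ triangular matrix.} Write $T_n=2U_n-J_n$, where $(U_n)_{ij}=\mathbf{1}[i\le j]$. The all-ones matrix $J_n=\mathbf{1}\mathbf{1}^\top$ has a trivial rank-one factorization with column norms $1$, and the class of factorizations in Step 1 is closed under addition with a constant-factor loss. Hence it is enough to produce an $O(\log n)$-bounded factorization of $U_n$.

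\textbf{Step 3: binary decomposition of $U_n$.} Assume $n=2^k$ and index rows and columns by $\{0,\ldots,n-1\}$ in binary. For $i\ne j$, let $r(i,j)$ be the highest bit position where they differ; then $i<j$ iff that bit is $0$ in $i$ and $1$ in $j$ and the prefixes above position $r(i,j)$ coincide. This produces the identity
\begin{equation*}
\mathbf{1}[i<j]=\sum_{r=0}^{k-1}\mathbf{1}[\text{prefix above bit }r\text{ agrees}]\cdot\mathbf{1}[\text{bit }r\text{ of }i=0]\cdot\mathbf{1}[\text{bit }r\text{ of }j=1].
\end{equation*}
Each summand factors as $\langle \phi_r(i),\psi_r(j)\rangle$, where $\phi_r(i)\in\mathbb{R}^{2^{k-r-1}}$ is the standard basis vector indexed by the prefix of $i$ above bit $r$, scaled by $\mathbf{1}[\text{bit }r\text{ of }i=0]$, and $\psi_r(j)$ is defined dually with the bit value flipped. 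Concatenating over $r$ gives feature vectors $\Phi(i),\Psi(j)$ of squared norm at most $k=O(\log n)$ whose inner product equals $\mathbf{1}[i<j]$. An extra coordinate absorbs the diagonal $i=j$, yielding the claimed factorization of $U_n$.

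\textbf{Main obstacle.} The combinatorics in Step 3 is transparent and the reduction in Step 2 is harmless. The most delicate bookkeeping lies in Step 1: translating between the PSD-difference form $\sym(W)=P-N$ and a bounded bilinear factorization of $W$ without losing diagonal control, and in particular verifying that the padding needed when reassembling $P$ and $N$ from Gram matrices keeps \emph{all} diagonal entries (not only those on the block corresponding to the rows or columns of $W$) bounded by $O(\beta)$.
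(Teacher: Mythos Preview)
The paper does not prove this proposition at all; it simply cites it from \cite{HazanKaSh12}. So there is nothing to compare against, and your task is only to make the argument self-contained and correct. Your outline is sound, and in fact the ``main obstacle'' you flag in Step~1 dissolves once you write it out.

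Concretely, suppose $W=U_1^\top U_2-V_1^\top V_2$ with all column squared norms at most $\beta$. Set $A=[\,U_1\,|\,U_2\,]$ and $B=[\,V_1\,|\,V_2\,]$, and let $P'=A^\top A$, $N'=B^\top B$. Then
\[
P'-N'=\begin{bmatrix}U_1^\top U_1-V_1^\top V_1 & W\\ W^\top & U_2^\top U_2-V_2^\top V_2\end{bmatrix},
\]
so $\sym(W)=P'-N'-E$ where $E=\mathrm{diag}(U_1^\top U_1-V_1^\top V_1,\;U_2^\top U_2-V_2^\top V_2)$. The point is that $E$ is \emph{already} presented as a difference of two PSD matrices with diagonals at most $\beta$, namely $E_+=\mathrm{diag}(U_1^\top U_1,U_2^\top U_2)$ and $E_-=\mathrm{diag}(V_1^\top V_1,V_2^\top V_2)$. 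Hence
\[
\sym(W)=(P'+E_-)-(N'+E_+),
\]
with both summands PSD and all diagonal entries at most $2\beta$. No spectral splitting of $E$ is needed, so there is no loss of diagonal control; the ``padding'' is just adding Gram matrices you already have in hand.

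Steps~2 and~3 are correct as written. The binary-prefix identity
\[
\mathbf{1}[i<j]=\sum_{r=0}^{k-1}\mathbf{1}[\text{bits above }r\text{ agree}]\cdot\mathbf{1}[\text{bit }r\text{ of }i=0]\cdot\mathbf{1}[\text{bit }r\text{ of }j=1]
\]
holds because exactly one summand (the one at the top differing bit) is nonzero, and each summand factors as an inner product of $\{0,1\}$-vectors of norm at most $1$, giving $\|\Phi(i)\|^2,\|\Psi(j)\|^2\le k$. The diagonal $\mathbf{1}[i=j]$ is handled by one extra block $\langle e_i,e_j\rangle$, adding $1$ to the squared norms. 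For general $n$ that is not a power of two, pad to $2^{\lceil\log_2 n\rceil}$ and invoke Proposition~\ref{prop:lean_A_2}. This is essentially the same hierarchical/dyadic decomposition used in \cite{HazanKaSh12}, so your route is not just correct but also the intended one.
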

Lastly, we will also need the following generalization of proposition \ref{prop:lean_A_3}
\begin{proposition}\label{prop:lean_A_4}
Let $W$ be an $n\times n$ $\pm 1$ matrix. Assume that there exists a sequence $0\le j(1),\ldots, j(n)\le n$ such that
\[
W_{ij}=\begin{cases} -1 & j\le j(i) \\ 1 & j> j(i)\end{cases}
\]
Then, $W$ is $O(log(n))$-decomposable.
\end{proposition}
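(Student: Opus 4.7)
The plan is to reduce Proposition \ref{prop:lean_A_4} to Proposition \ref{prop:lean_A_3} by realizing $W$ as a submatrix of a tensor-product matrix of the form $\tilde T\otimes J_n$, where $\tilde T$ is a truncation of $T_{n+1}$. The key observation is that every row of $W$ is a \emph{threshold row} (a block of $-1$s followed by a block of $+1$s), and all such rows of length $n$ with threshold in $\{0,1,\dots,n\}$ already appear inside $T_{n+1}$.

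Concretely, I would let $\tilde T$ be the $(n+1)\times n$ matrix obtained from $T_{n+1}$ by deleting its last column. Row $i$ of $\tilde T$ has $-1$'s precisely in columns $1,\dots,i-1$ and $+1$'s in columns $i,\dots,n$, so as $i$ ranges over $[n+1]$ the thresholds $0,1,\dots,n$ are all realized. By Proposition \ref{prop:lean_A_3} together with Proposition \ref{prop:lean_A_2}, $\tilde T$ is $O(\log(n))$-decomposable.

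Because several rows of $W$ may share a threshold while Proposition \ref{prop:lean_A_2} only permits deletion, I would introduce multiplicities via a tensor product. Taking $A = J_n$, which is PSD with diagonal entries equal to $1$, Proposition \ref{prop:lean_A_1} gives that $\tilde T\otimes J_n$ is still $O(\log(n))$-decomposable, and in this matrix each row of $\tilde T$ is duplicated $n$ times. I would then select the rows indexed by $(j(k)+1,k)$ for $k=1,\dots,n$ (the second coordinate makes the selection injective even when the $j(k)$'s coincide) and the columns indexed by $(j,1)$ for $j=1,\dots,n$. A direct check shows that the $(k,j)$-entry of the resulting $n\times n$ submatrix equals $\tilde T_{j(k)+1,\,j}$, which is $-1$ for $j\le j(k)$ and $+1$ for $j>j(k)$, i.e.\ precisely $W_{kj}$. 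Invoking Proposition \ref{prop:lean_A_2}, and noting that simultaneous row and column permutations preserve $\beta$-decomposability (they amount to conjugating $\sym(W)$ by a block-diagonal permutation matrix, which preserves PSDness and the diagonal bound), completes the proof.

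The only real obstacle is the multiplicity issue: deletion alone cannot produce repeated rows, so we cannot realize an arbitrary threshold sequence $j(1),\dots,j(n)$ as a submatrix of $\tilde T$ itself. The tensoring with $J_n$ is the crucial device, since it manufactures $n$ copies of each threshold row at no cost in the decomposability parameter $\beta$; everything else is bookkeeping.
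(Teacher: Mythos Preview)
Your proof is correct and follows essentially the same route as the paper: tensor a triangular threshold matrix with $J$ to manufacture enough copies of each threshold row, then extract $W$ by deleting rows and columns via Propositions~\ref{prop:lean_A_1}--\ref{prop:lean_A_3}. The paper's version first permutes the rows so that $j(1)\le\cdots\le j(n)$ and works directly with $T_n\otimes J$, whereas you avoid the sorting step by using the second tensor index for injectivity and pass to a truncation of $T_{n+1}$ to cover the edge case $j(i)=n$; these are cosmetic differences only.
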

\begin{proof}
Since switching rows of a $\beta$-decomposable matrix leaves a $\beta$-decomposable matrix, we can assume without loss of generality that $j(1)\le j(2)\le\ldots\le j(n)$.
Let $J$ be the $n\times n$ all ones matrix. It is not hard to see that $W$ can be obtained from $T_n\otimes J$ by iteratively deleting rows and columns. Combining propositions \ref{prop:lean_A_1}, \ref{prop:lean_A_2} and \ref{prop:lean_A_3}, we conclude that $W$ is $O(\log(n))$-decomposable, as required.
\end{proof}

We are now ready to prove Lemma \ref{lem:learning_A}

\begin{proof} (of Lemma \ref{lem:learning_A})
Denote $\ca^r_n = \ch_{n,2}|_{A^r_n}$. We split into cases.

{\em Case 1, r=0:}
Note that $A_n^0=\{e_i-e_j\mid i,j\in [n]\}$. Define $\psi_n:A_n^0\to [n]\times [n]$ by $\psi_n(e_i-e_j)=(i,j)$. We claim that $\psi_n$ is an efficient realization of $\ca_n^0$ by $n\times n$ matrices that are $O(\log(n))$ decomposable. Indeed, let $h=h_{w,b}\in \ca_n^0$, and let 
$W$ be the $n\times n$ matrix $W_{ij}=W_{\psi_n(e_i-e_j)}=h(e_i-e_j)$. It is enough to show that $W$ is $O(\log(n))$-decomposable.

We can rename the coordinates so that
\begin{equation}\label{eq:1}
w_1\ge w_2\ge\ldots\ge w_n
\end{equation}
From equation (\ref{eq:1}), it is not hard to see that there exist numbers
\[
0\le j(1)\le j(2)\le\ldots\le j(n)\le n
\]
for which
\[
W_{ij}=\begin{cases} -1 & j\le j(i) \\ 1 & j> j(i)\end{cases}
\] 
The conclusion follows from Proposition \ref{prop:lean_A_4}

{\em Case 2, r=2 and r=-2:}
We confine ourselves to the case $r=2$. The case $r=-2$ is similar. Note that $A_n^2=\{e_i+e_j\mid i\ne j\in [n]\}$. Define $\psi_n:A_n^2\to [n]\times [n]$ by $\psi_n(e_i+e_j)=(i,j)$. We claim that $\psi_n$ is an efficient realization of $\ca_n^2$ by $n\times n$ matrices that are $O(\log(n))$ decomposable. Indeed, let $h=h_{w,b}\in \ca_n^2$, and let 
$W$ be the $n\times n$ matrix $W_{ij}=W_{\psi_n(e_i+e_j)}=h(e_i+e_j)$. It is enough to show that $W$ is $O(\log(n))$-decomposable.

We can rename the coordinates so that
\begin{equation}\label{eq:2}
w_1\le w_2\le\ldots\le w_n
\end{equation}
From equation (\ref{eq:2}), it is not hard to see that there exist numbers
\[
n\ge j(1)\ge j(2)\ge\ldots\ge j(n)\ge 0
\]
for which
\[
W_{ij}=\begin{cases} -1 & j\le j(i) \\ 1 & j> j(i)\end{cases}
\] 
The conclusion follows from Proposition \ref{prop:lean_A_4}

{\em Case 3, r=1 and r=-1:}
We confine ourselves to the case $r=1$. The case $r=-1$ is similar. Note that $A_n^1=\{e_i \mid i \in [n]\}$. Define $\psi_n:A_n^0\to [n]\times [n]$ by $\psi_n(e_i)=(i,i)$. We claim that $\psi_n$ is an efficient realization of $\ca_n^1$ by $n\times n$ matrices that are $3$-decomposable (let alone, $\log (n)$-decomposable). Indeed, let $h=h_{w,b}\in \ca_n^1$, and let 
$W$ be the $n\times n$ matrix with  $W_{ii}=W_{\psi_n(e_i)}=h(e_i)$ and $-1$ outside the diagonal. It is enough to show that $W$ is $3$-decomposable. Since $J$ is $1$-decomposable, it is enough to show that $W+J$ is $2$-decomposable. However, it is not hard to see that every diagonal matrix $D$ is $(\max_i |D_{ii}|)$-decomposable.
\end{proof}

\begin{proof} (of Lemma \ref{lem:decompostion})
Let $S = (x_1,y_1),\ldots,(x_m,y_m)$ be a training set and let
$\hat{m}_i$ be the number of examples in $S$ that belong to $X_i$. 
Given that the values of the random variables $\hat{m}_1,\ldots,\hat{m}_i$ is
determined, we have that w.p. of at least $1- \delta$, 
\[ 
\forall i,~~
\Err_{D_i}(h_i) - \Err_{D_i}(h^*) \le \sqrt{\frac{C(d +
\log(k/\delta))}{\hat{m}_i}} ,
\]
where $D_i$ is the induced distribution over $X_i$, $h_i$ is the
output of the $i$'th algorithm, and $h^*$ is the optimal hypothesis
w.r.t. the original distribution $D$. Define, 
\[
m_i = \max\{C(d+\log(k/\delta)), \hat{m}_i \} ~.
\]
It follows from the above that we also have, w.p. at least $1-\delta$,
for every $i$,
\[
\Err_{D_i}(h_i) - \Err_{D_i}(h^*) \le  \sqrt{\frac{C(d +
\log(k/\delta))}{m_i}} =: \epsilon_i .
\]
Let $\alpha_i = D\{(x,y) : x \in \cx_i\}$, and note that $\sum_i
\alpha_i = 1$. Therefore, 
\begin{align*}
\Err_D(h_S) - \Err_D(h^*) &\le \sum_i \alpha_i \epsilon_i = \sum_i \sqrt{\alpha_i}
\sqrt{\alpha_i \epsilon_i^2} \\
&\le \sqrt{\sum_i \alpha_i} \,
\sqrt{\sum_i \alpha_i \epsilon_i^2} = \sqrt{\sum_i \alpha_i
  \epsilon_i^2} \\
&= 
\sqrt{\frac{C(d +
\log(k/\delta))}{ m}} \sqrt{\sum_i \frac{\alpha_i m}{m_i}} .
\end{align*}
Next note that if $\alpha_i
m< C(d+\log(k/\delta))$ then $\alpha_i m/m_i \le 1$. Otherwise, using
Chernoff's inequality, for every $i$ we have
\[
\Pr[m_i < 0.5\alpha_i m] \le e^{- \alpha_i m / 8}\le e^{- (d+\log(k/\delta)) } = e^{-d}
\frac{\delta}{k} \le \frac{\delta}{k} ~.
\]
Therefore, by the union bound, 
\[
\Pr[ \exists i: m_i < 0.5 \alpha_i m] \le \delta . 
\]
It follows that with probability of at least $1-\delta$, 
\[
\sqrt{\sum_i \frac{\alpha_i m}{m_i}} \le \sqrt{2k} ~.
\]
All in all, we have shown that with probability of at least
$1-2\delta$ it holds that
\[
\Err_D(h_S) - \Err_D(h^*) \le \sqrt{\frac{2Ck(d +
\log(k/\delta))}{ m}} ~.
\]
Therefore, the the algorithm learns $\ch$ using
\[
 \le \frac{2Ck(d +
\log(2k/\delta))}{ \epsilon^2}
\]
examples.
\end{proof}
\end{document}